\newtheorem{theorem}{Theorem}[section]
\newtheorem{lemma}[theorem]{Lemma}
\newtheorem{prop}[theorem]{Proposition}
\newcommand{\T}[1]{{\mathcal{#1}}} % tensor
\newcommand{\V}[1]{{\mathbf{#1}}} % vector
\newcommand{\cX}{\mathcal{X}}
\newcommand{\cY}{\mathcal{Y}}
\newcommand{\cC}{\mathcal{C}}
\newcommand{\adain}{\mathrm{AdaIN}}
\newcommand{\adapad}{\mathrm{AdaPad}}
\newcommand{\bbE}{\mathbb{E}}
\newcommand{\cF}{\mathcal{F}}
\newcommand{\ours}{\texttt{DyAd}}
\title{Meta-Learning Dynamics Forecasting Using \\ Task Inference}
\author{%
  Rui Wang*\\
  UC San Diego\\
  % examples of more authors
  \And
   Robin Walters* \\
   Northeastern University \\
%   Address \\
  % \texttt{email} \\
  \And
   Rose Yu \\
   UC San Diego\\
  % Address \\
  % \texttt{email} \\
  % \And
  % Coauthor \\
  % Affiliation \\
  % Address \\
  % \texttt{email} \\
  % \And
  % Coauthor \\
  % Affiliation \\
  % Address \\
  % \texttt{email} \\
}
\begin{document}

\maketitle

\begin{abstract}
Current deep learning models for dynamics forecasting struggle with generalization. They can only forecast in a specific domain and fail when applied to systems with different parameters, external forces, or boundary conditions.  We propose a model-based meta-learning method called \ours{} which can generalize across heterogeneous domains by partitioning them into different tasks.  \ours{} has two parts: an encoder which infers the time-invariant hidden features  of the  task with weak supervision, and a forecaster which learns the shared  dynamics of the entire domain. The encoder adapts and controls the forecaster during inference  using adaptive instance normalization and adaptive padding.  Theoretically, we prove that the generalization error of such procedure is related to the task relatedness in the source domain, as well as the domain differences between source and target. Experimentally, we demonstrate that our model outperforms  state-of-the-art approaches on forecasting complex physical dynamics including  turbulent flow, real-world sea surface temperature and ocean currents.  We open-source our code at \url{https://github.com/Rose-STL-Lab/Dynamic-Adaptation-Network}.
\end{abstract}

\section{Introduction}
% Recent work has made great strides in predicting dynamical systems over a wide range of domains and data types, from disease spread using graphs, to fluid modeling using images, to pedestrians as particle systems \rw{add cites}.  Using deep learning to model dynamics has several advantages over direct numerical simulation.  Neural networks are able to operate on systems with only partially observed data or not fully specified dynamics.  Moreover, neural networks can make accurate predictions orders of magnitude faster than differential equation solvers.  

Modeling dynamical systems with deep learning has shown great success in a wide range of systems from climate science,  Internet of Things (IoT)  to infectious diseases \citep{hwang2019improving, eulerian, chen2018neural,li2020fourier, matsubara2019dynamic, huang2021coupled}. 
% Using deep learning to model dynamics has several advantages over direct numerical simulation. Neural networks are able to operate on systems with only partially observed data or not fully specified dynamics and perform predictions orders of magnitude faster. 
%
However, the main limitation of previous works is limited generalizability.  Most approaches only focus on a specific system and train on past data in order to predict the future.  Thus, a new model must be trained to predict a system with different dynamics. Consider, for example, learning fluid dynamics; shown in Fig. \ref{fig:example} are two fluid flows with different degrees of turbulence. Although the flows are governed by the same equations, the difference in buoyant forces would require two separate deep learning models to forecast. Therefore, it is imperative to develop \textit{generalizable} deep learning models for dynamical systems that can learn and predict well over a large heterogeneous domain.  
% Once trained, the model can quickly learn and adapt to new forecasting tasks.  
% This is impractical in applications in which predictions need to be real time.

\begin{wrapfigure}{r}{0.5\textwidth}
\vspace{-7mm}
  \begin{center}
    \includegraphics[width=\textwidth]{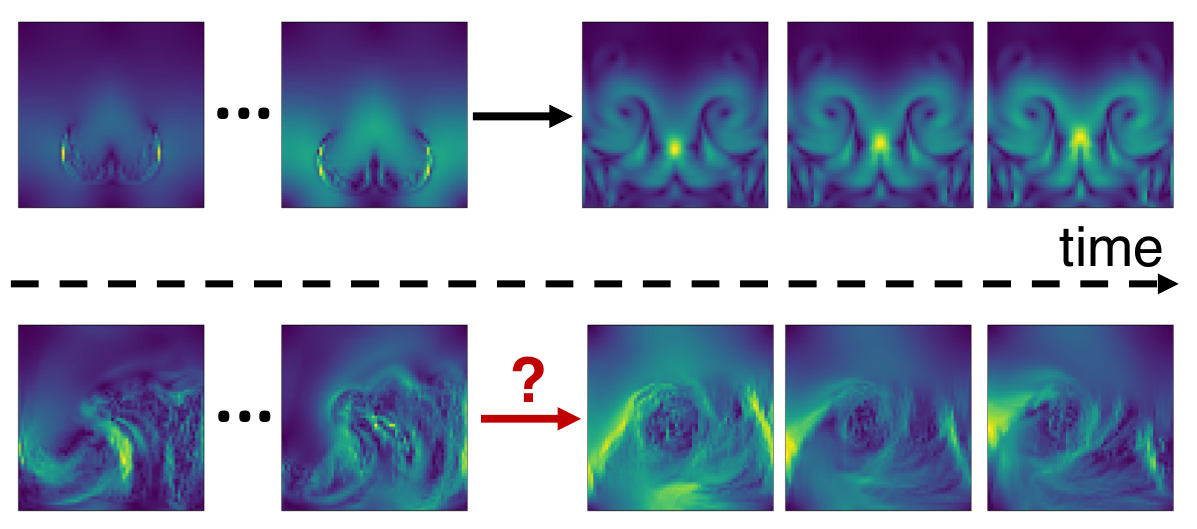}
  \end{center}
  \caption{Meta-learning dynamic forecasting on turbulent flow. The model needs to generalize to a flow with a very different buoyant force. }
  \label{fig:example}
   \vspace{-5mm}
\end{wrapfigure}
Meta-learning \citep{thrun1998learning, baxter1998theoretical, Finn2017ModelAgnosticMF}, or learning to learn, improves generalization by learning multiple tasks from  the environment. Recent developments in meta-learning have been successfully applied to few-shot classification \cite{Munkhdalai2017MetaN, suo2020tadanet},  active learning \citep{yoon2018bayesian}, and reinforcement learning \citep{gupta2018meta, hua2021hmrl}. However, meta-learning in the context of forecasting high-dimensional physical dynamics has not been  studied before. The challenges with meta-learning dynamical systems are unique in that (1) we need to efficiently infer the latent representation of the dynamical system given observed time series data,  (2) we need to account for changes in unknown initial and boundary conditions, and (3) we need to model the temporal dynamics across heterogeneous domains.

Our approach is inspired by the fact that similar dynamical systems may share  time-invariant hidden features.  Even the slightest change in these features may lead to vastly different phenomena.  For example, in climate science, fluids are governed by a set of differential equations called Navier-Stokes equations. Some features such as kinematic viscosity and external forces (e.g. gravity),  are time-invariant and determine the flow characteristics. By inferring this latent representation, we can model diverse system behaviors from smoothly flowing water to atmospheric turbulence.

Inspired by neural style transfer \citep{karras2019style}, we propose a model-based meta-learning method, called \ours{}, which can rapidly adapt to  systems with varying dynamics. \ours{} has two parts, an encoder $g$ and a forecaster $f$.  
The encoder maps different dynamical systems to  time-invariant hidden features representing constants of motion, boundary conditions, and external forces which characterize the system.  
The forecaster $f$ then takes the learnt hidden features from the encoder and the past system states to forecast the future system state.  Controlled by the time-invariant hidden features, the forecaster has the flexibility to adapt to a wide range of systems with heterogeneous dynamics. 

%Since the prediction network adapts to each different dynamical system without the need for any retraining. 

% In this way, the prediction network plays a similar role to a parameterized family of differential equations and the dynamical style vector is analogous to the various constants which adapt the family to each particular system.    Our method is a model-based meta-learning approach for dynamics forecasting.    

%talk about domain adaptation/metalearning more?
Unlike gradient-based meta-learning techniques such as MAML \citep{Finn2017ModelAgnosticMF}, \ours{} automatically adapts during inference using an encoder and does not require retraining.  Similar to model-based meta-learning methods such as MetaNets \citep{Munkhdalai2017MetaN}, we employ a two-part design with an adaptable learner which receives task-specific weights.  However, for time-series forecasting, since input and output come from the same domain, a support set of labeled data is unnecessary to define the task. The encoder can infer the task directly from the query input. %additional

Our contributions include:
\begin{itemize}
    \item A novel model-based meta-learning method (\ours{}) for dynamic forecasting in a large heterogeneous domain.
    \item An encoder capable of extracting the time-invariant hidden features of a dynamical system using time-shift invariant model structure and weak supervision. 
    \item A new adaptive padding layer (AdaPad), designed for adapting to boundary conditions. 
    \item Theoretical guarantees for \ours{} on the generalization error of task inference in the  source domain as well as domain adaptation to the target domain. 
    \item Improved generalization performance on heterogeneous domains such as fluid flow and sea temperature forecasting, even to new tasks outside the training distribution.
\end{itemize}

% relative to models trained separately on homogeneous domains or on the entire domain but without task inference.
%     \item Good generalization to new tasks with parameters outside the training distribution. 
%\rw{Rui: similar to disentanglement.  improvement even on same $c$}

%\rw{The time-invariant dynamical style may be viewed as partial disentanglement of the system state.  Unlike complete disentanglement in which the system state is encoded as a factorized latent state including, for example, a time-invariant component $\bar{z}$ and time-varying component $\hat{z}$, the dynamical style vector gives only $\bar{z}$ and cannot be used to reconstruct the full system state.  Nonetheless, this partial disentanglement provides a strong signal to the prediction network which is very useful for generalization.}

%However, since forecasting is self-supervised, our method does not require a separate support set\ry{what does this mean?}; the encoder infers the task directly from input. 

% Since \ours{} can generalize to new tasks without explicit labels, our method may also be considered a form of multi-source domain adaptation across a continuous family of domains.  

%%%%%%%%%%%%%%%%%%%%%%%%%%%%%%%%%%%%%%%%%
\section{Methods}

\subsection{Meta-learning in dynamics forecasting}
% A dynamical system is governed by a set of differential equations:
% \begin{equation}
%     \left\{ \xi^i(\V{x}, \V{\dot{x}}, \V{\ddot{x}}, \dots ;\psi)=0  \right\}
%     \label{eqn:dyn_sys}
% \end{equation}
% where $\V{x} \in \mathbb{R}^d $ is a $d$-dimensional state of the system and $\psi$ are the parameters.  Oftentimes, the dynamics parameter $\psi$ can represent different system coefficients, external forces or boundary conditions.  

Let $\V{x} \in \mathbb{R}^d $ be a $d$-dimensional state of a dynamical system governed by parameters $\psi$. The problem of dynamics forecasting is that given a  sequence of past states $(\V{x}_1, \dots,  \V{x}_t )$, we want to learn a map $f$ such that: $f: (\V{x}_{1}, \dots, \V{x}_{t}) \longmapsto (\V{x}_{t+1}, \dots \V{x}_{t+k}).$

Here $l$ is the length of the input series, and $h$ is the forecasting horizon in the output. Existing approaches for dynamics forecasting only predict future data for a specific system as a single task. Here a task refers to forecasting for a specific system with a given set of parameters. The resulting models often generalize poorly to different system dynamics. Thus a new model must be trained to predict for each specific system.

% We focus on the case of time series forecasting.  In this case the inputs are given by $t$ time steps of the system state $X_{1:t}$ and the outputs are the next frame $X_{t+1}$.  That is, the input space and output space are $\cX = \cF^k$ and $\cY = \cF$ respectively where $\cF$ is the state space of the time series.  
To perform meta-learning, we identify each forecasting task by some parameters $c \subset \psi$,  such as constants of motion, external forces, and boundary conditions. We learn multiple tasks simultaneously  and infer the task  from data. Here we use $c$ for a subset of system parameters $\psi$, because we usually do not have the full knowledge of the system dynamics. 
In the  turbulent flow example, the state $\V{x}_t$ is the velocity field  at time $t$. 
%sample  $x=(\V{x}_{t-l+1}, \dots, \V{x}_{t}) $ is $l$ past time steps of the velocity flow field and the goal is to predict a single future time step $y = \V{x}_{t+1} $ of the flow. 
Parameters $c$ can represent  Reynolds number, mean vorticity, mean magnitude, or a vector of all three.  

% Fluid flows with different  Reynolds numbers may have very different dynamics and characteristics. The parameters $c$ only partially describe the data.      

Let $\mu$ be the data distribution over $\cX \times \cY$ representing the function $f \colon \cX \to \cY$ where $\cX = \mathbb{R}^{d \times t}$ and $\cY =  \mathbb{R}^{d \times k}$.  Our main assumption is that the domain $\cX$ can be partitioned into separate tasks $\cX = \cup_{c \in \cC} \cX_c$, where $\cX_c$ is the domain for task $c$ and $\cC$ is the set of all tasks.  The data in the same task share the same set of parameters. Let $\mu_c$ be the conditional distribution over $\cX_c\times \cY$ for task $c$.

During training, the model is presented with data drawn from a subset of tasks $\{(x,y):(x,y) \sim \mu_c, c \sim C \}$.  Our goal is to learn the function $f \colon \cX \to \cY$ over the whole domain $\cX$ which can thus generalize across all tasks $c \in \cC$. To do so, we need to learn the  map $g \colon \cX \to \cC$ taking $x \in \cX_c$ to $c$ in order to infer the task  with minimal supervision. 

%  such that $g(x) = c$. 
% Our goal is to learn a function $f$ that generalizes to all tasks:
%\[ f: \cup_{c \in \T{C}} \T{X}_c \rightarrow \T{Y}. \]

\begin{figure*}[t!]
    \centering
    \includegraphics[width=\textwidth]{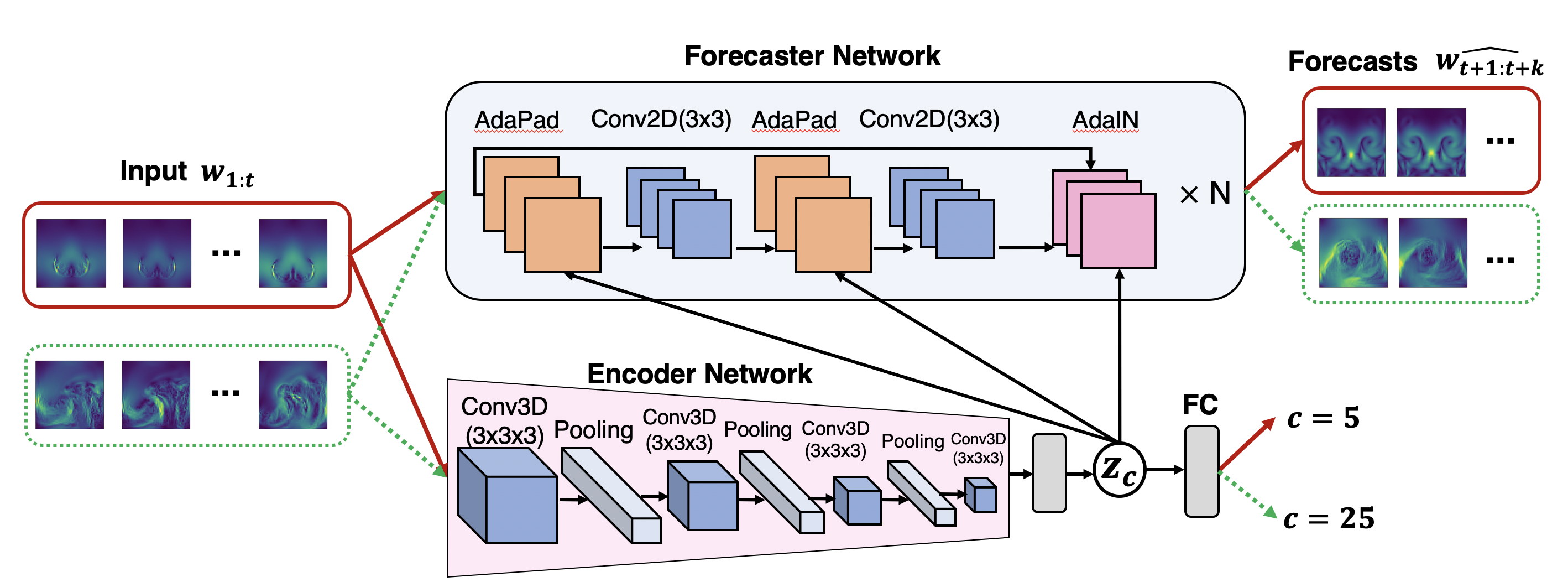}
    \caption{Overview of \ours{} applied to two inputs of fluid turbulence, one with small external forces and one with larger external forces.  The encoder infers the time-shift invariant characteristic variable $z$ which is used to adapt the forecaster network.  }
    \label{fig:model_diagram}
\end{figure*}

\subsection{\ours{}: Dynamic Adaptation Network}
We propose a model-based meta-learning approach for dynamics forecasting. 
%
% Our approach infers the forecasting task as hidden representation and uses it to adapt the prediction.
Given multiple forecasting tasks, we propose to learn the function $f$ in two stages. That is, by first inferring the task $c$ from the input $x$,  and then adapting to a specialized forecaster $f_c \colon \cX_c \to \cY$ for each task. An alternative  is to use a single  deep neural network to directly model $f$ in one step over the whole domain. But this requires the training set to have good and uniform coverage of the different tasks.  If the data distribution $\mu$ is highly heterogeneous  or the training set is not sampled i.i.d. from the whole domain, then a single model may struggle with generalization.

We hypothesize that by partitioning the domain into different tasks, the model would learn to pick up task-specific features without requiring uniform coverage of the training data. Furthermore, by separating task inference and forecasting into two stages, we allow the forecaster to rapidly adapt to new tasks that  never appeared  in the training set.

As shown in Fig. \ref{fig:model_diagram}, our model consists of two parts: an encoder $g$ and a forecaster $f$. We introduce $z_c$ as a time-invariant hidden feature for task $c$. We assume that $c$ depends linearly on the hidden feature for simplicity and easy interpretation. We design the encoder to infer the hidden feature  $z_c$ given the input $x$.  We then use   $z_c$ to adapt the forecaster $f$ to the specific task, i.e., model $y = f_c(x)$ as $y = f(x,z_c)$. As the  system dynamics are encoded in the input sequence $x$, we can feed the same input sequence $x$ to a forecaster and generate predictions $\hat{y} = f_c(x)$.

\subsection{Encoder Network}

\begin{comment}
Care must be taken with how the functions $f_c$ are implemented.  A naive approach in which each $f_c$ has its own weights and is trained separately would be highly impractical if the number of tasks $|\cC|$ is large.  Each $f_c$ would have very little training data and the combined model would have a large number of weights.  In essence, treating each $c$ completely separately fails to take advantage of the similarity between tasks $f_c$. 
\end{comment}
%  We propose a model-based quick adaptation approach inspired by style transfer techniques in image synthesis. 

% Our proposal strikes a balance between these two extremes.

%  We implement the prediction mapping $f$ using \texttt{ResNet}, a convolutional architecture which has been shown to be effective in dynamics prediction \cite{wang2020towards, he2016deep}. 

%domain adaptation

% For time series prediction we assume the subdomain mapping $C$ is invariant to time shifts.  
The encoder maps the input $x$ to the hidden features $z_c$ that are time-invariant. To enforce this inductive bias, we encode time-invariance both in the architecture and in the training objective. 

\textbf{Time-Invariant Encoder.}
The encoder is implemented using 4 Conv 3D layers, each followed by \texttt{BatchNorm}, \texttt{LeakyReLU}, and max-pooling. Note that theoretically, max-pooling is not perfectly shift invariant since 2x2x2 max-pooling is  equivariant to shifts of size $2$ and only approximately invariant to shifts of size 1. But standard convolutional architectures often include max-pooling layers to boost performance.   We convolve both across spatial and  temporal dimensions.  

After that, we use a global mean-pooling layer and a fully connected layer to estimate the hidden feature $\hat{z}_c$.  The task parameter depends linearly on the hidden feature. We use a fully connected layer  to compute the parameter estimate  $\hat{c}$.  

Since convolutions are equivariant to shift (up to boundary frames) and mean pooling is invariant to shift, the encoder is shift-invariant.   In practice, shifting the time sequence one frame forward will add one new frame at the beginning and drop one frame at the end. This creates some change in output value of the encoder.  Thus, practically, the encoder is only approximately shift-invariant.

\textbf{Encoder Training.}
The encoder network $g$ is trained first. To combat the loss of shift invariance from the change from the boundary frames, we train the encoder using a time-invariant loss.  Given two training samples $(x^{(i)},y^{(i)})$ and $(x^{(j)},y^{(j)})$ and their task parameters $c$, we have loss
\begin{equation}\label{eqn:encoder}
    \mathcal{L}_{\mathtt{enc}} = \sum_{c\sim \cC} \|\hat{c}-c\|^2 + \alpha \sum_{i,j,c}\| \hat{z}_c^{(i)} - \hat{z}_c^{(j)}\|^2 + \beta \sum_{i,c}\| \| \hat{z}_c^{(i)} \|^2  -  m \|^2  
\vspace{-7pt}
\end{equation}
where $\hat{z}^{(i)} = g(x^{(i)})$ and $\hat{z}^{(j)} = g(x^{(j)})$ and $\hat{c}^{(i)} = W \hat{z}_c^{(i)} + b$ is an affine transformation of $z_c$.  

The first term $\|\hat{c}-c\|^2$ uses weak supervision of the task parameters whenever they are available. Such weak supervision helps guide the learning of hidden feature $z_c$ for each task.  While not all  parameters of the dynamical system are known, we can compute approximate values in the datum $c^{(i)}$ based on our domain knowledge.  For example, instead of  the Reynolds number of the fluid flow, we can use the average vorticity as a surrogate for task parameters.  

The second term $\| \hat{z}_c^{(i)} - \hat{z}_c^{(j)}\|^2$ is the time-shift invariance loss, which penalizes the changes in latent variables between samples from different time steps. Since the time-shift invariance of convolution is only approximate, this loss term drives the time-shift error even lower. The third term $| \| \hat{z}_c^{(i)} \|  -  m |^2$ ($m$ is a positive value) prevents the encoder from generating small $\hat{z}_c^{(i)}$ due to time-shift invariance loss. It also helps the encoder to learn more interesting $z$, even in the absence of weak supervision.

\textbf{Hidden Features.}
The encoder learns time-invariant hidden features. These hidden features resemble the time-invariant dimensionless parameters \citep{kunes2012dimensionless} in physical modeling,  such as Reynolds number  in fluid mechanics.

The hidden features may also be viewed as partial disentanglement of the system state. As suggested by \cite{locatello2019challenging, Nie2020SemiSupervisedSF}, our disentanglement method is guided by inductive bias and training objectives. %Disentanglement of latent states has been shown to improve generalization in dynamics prediction.
Unlike complete disentanglement, as in e.g. \cite{massague2020learning}, in which the latent representation is factored into time-invariant and time-varying components, we focus only on  time-shift-invariance. 

Nonetheless, the hidden features can control the forecaster which is useful for generalization.

% Due to the inductive bias of time-shift invariance and an information bottleneck in the size of $z$, the characteristic latent variable may only encode partial information regarding the system dynamics.  Thus the prediction network must also be passed the input $x$ in addition to the characteristic latent variable $z$. 

\subsection{Forecaster Network.}
The forecaster incorporates the hidden feature $z_c$ from the encoder and adapts to the specific forecasting task $f_c = f(\cdot,z_c)$. In what follows, we use $z$ for $z_c$.
We use two specialized layers, adaptive instance normalization ($\adain$) and adaptive padding ($\adapad$).
$\adain$ has been used in neural style transfer  \citep{karras2019style, huang2017arbitrary}  to control generative networks.  Here, $\adain$ may adapt for specific coefficients and external forces.  We also introduce a new layer, $\adapad(x,z)$, which is designed for encoding the boundary conditions of dynamical systems.  The backbone of the forecaster network can be any sequence prediction model. We use a design that is similar to \texttt{ResNet} for spatiotemporal sequences.

\begin{wrapfigure}{r}{0.27\textwidth}
\vspace{-10pt}
    \includegraphics[width=0.85\textwidth]{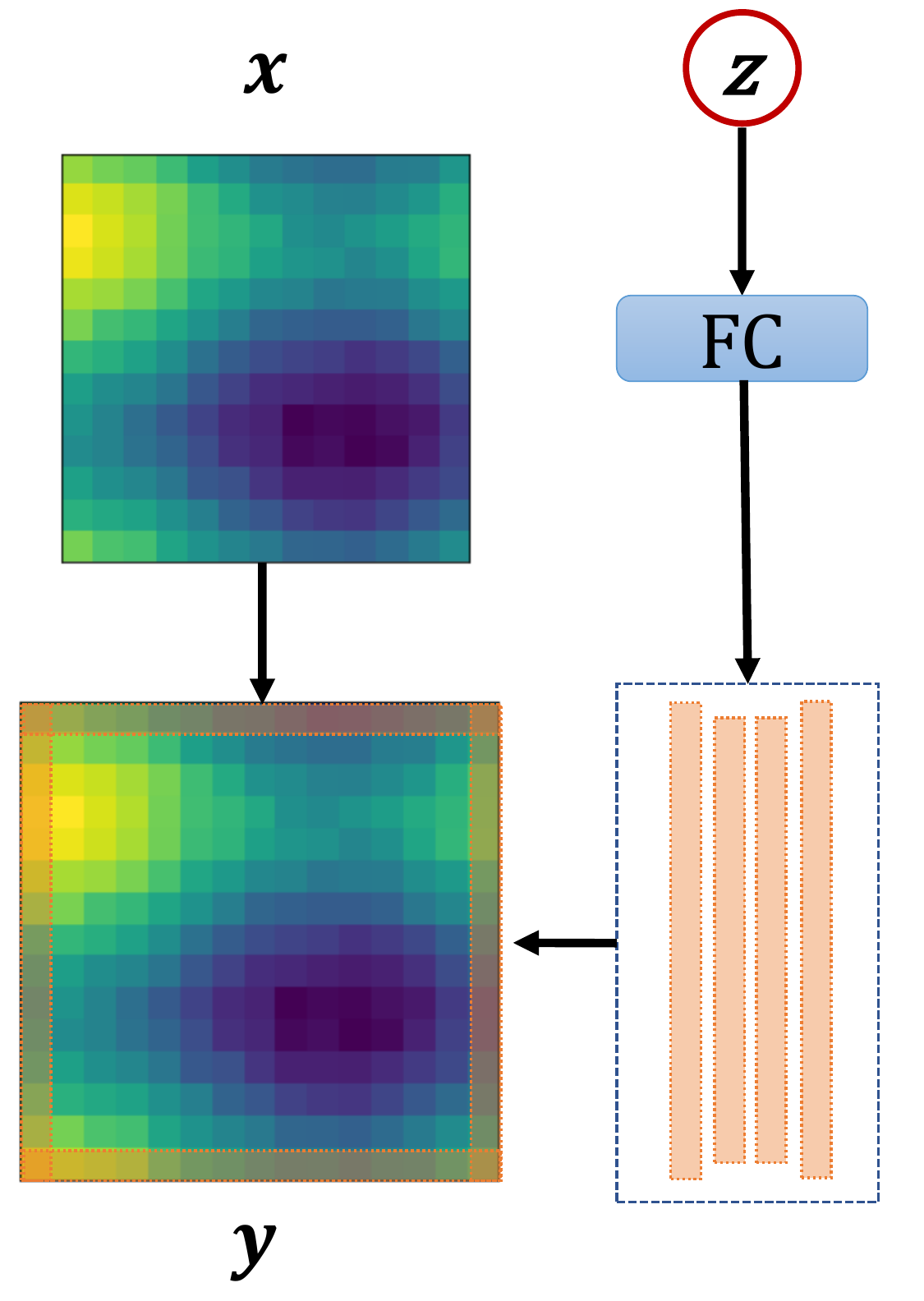}
    \caption{Illustration of the AdaPad operation.}
    \label{fig:adapad}
    \vspace{-5pt}
\end{wrapfigure}

\paragraph{AdaIN.}
We employ  $\adain$ to adapt the forecaster network.  Denote the channels of input $x$ by $x_i$ and let $\mu(x_i)$ and $\sigma(x_i)$ be the mean and standard deviation of channel $i$.  For each $\adain$ layer, a particular style is computed $s = (\mu_i,\sigma_i )_i = Az + b$, where the linear map $A$ and bias $b$ are learned weights.  Adaptive instance normalization is then defined as $y_i = \sigma_i \frac{x_i - \mu(x_i)}{\sigma(x_i)} + \mu_i.$
In essence, the channels are renormalized to the style $s$.
% \rw{use equation to make connection stronger, e.g. navier stokes?}

For dynamics forecasting, the hidden feature $z$ encodes data analogous to the coefficients of a differential equation and external forces on the system. In numerical simulation of a differential equation these coefficients enter as scalings of different terms in the equation and the external forces are added to the combined force equation \citep{Butcher1996numerical}.  Thus in our context $\adain$, which scales channels and adds a global vector, is well-suited to injecting this information.  

\paragraph{AdaPad.}
To complement $\adain$, we introduce  $\adapad$, which  encodes the boundary conditions of each specific dynamical system.  Generally when predicting dynamical systems, error is introduced along the boundaries since it is unknown how the dynamics interact with the boundary of the domain, and there may be unknown inflows or outflows.  In our method, the inferred hidden feature $z$ may contain the boundary information.  $\adapad$ uses the hidden features to compute the boundary conditions via a linear layer. Then it applies the boundary conditions as padding immediately outside the spatial domain in each layer, as shown in Fig. \ref{fig:adapad}.
%Fig.\ref{fig:styleblock} visualizes one block of the forecaster network.

\paragraph{Forecaster Training}
The forecaster is trained separately after the encoder. The kernels of the convolutions and the  mappings of the $\adain$ and $\adapad$ layers are all trained simultaneously as the forecaster network is trained.  Denote the true state as $y$ and the predicted state as $\hat{y}$, we compute the loss per time step $\|\hat{y}-y\|^2$ for each example. We accumulate the loss over different time steps and generate multi-step forecasts in an autoregressive fashion. In practice, we observe separate training achieves better performances than training end-to-end, see experiments for details.

% \begin{figure}[b!]
%     \centering
%     % \includegraphics[width=0.35\textwidth, trim=30 200 540 10,clip]{}
%  \includegraphics[width=0.28\textwidth]{}
%     \caption{Detail of one block of the forecaster network.}
%     \label{fig:styleblock}
% \end{figure}
%
\subsection{Theoretical Analysis}
The high-level idea of our method is to learn a good representation of the  dynamics that generalizes well across a heterogeneous domain, and then adapt this representation to make predictions on new tasks.  Our model achieves this by learning on multiple tasks simultaneously and then adapting to new tasks with domain transfer. We prove that learning the tasks simultaneously as opposed to independently results in better generalization (Proposition \ref{prop:jointbound} in Appendix). We also provide a theoretical decomposition for the generalization error.

Specifically, our hypothesis space has the form $\lbrace x \mapsto f_\theta(x,g_\phi(x)) \rbrace$ where $\phi$ and $\theta$ are the weights of the encoder and forecaster respectively.  Let $\epsilon_{\cX}$ be the error over the entire domain $\cX$, that is, for all $c$.  Let $\epsilon_{\mathtt{enc}}(g_\phi) = \bbE_{x \sim \cX}(\mathcal{L}_1(g(x),g_\phi(x))$ be the encoder error where $g \colon \cX \to \cC$ is the ground truth.  Let $\mathcal{G} = \{ g_\phi \colon \cX \to \cC \}$ be the encoder hypothesis space.  Denote the empirical risk of $g_\phi$ by $\hat{\epsilon}_{\mathtt{enc}}(g_\phi)$. $W_1$ denotes the Wasserstein distance between tasks. $R(\mathcal{G})$ and $R(\cF)$ represent the Rademacher complexities of encoder and forecaster.  The following Proposition decomposes  the generalization error in terms of forecaster error, encoder error, and distance between tasks.  
\begin{prop} \label{main-theorem}
Assume the forecaster $c \mapsto f_\theta(\cdot,c)$ is Lipschitz continuous with Lipschitz constant $\gamma$ uniformly in $\theta$ and $l \leq 1/2$.  Let $\lambda_c = \mathrm{min}_{f \in \T{F}} ( \epsilon_c(f) + 1/K \sum_{k=1}^K \epsilon_{c_k}(f) )$.  For large $n$ and probability $\geq 1-\delta$,
\begin{align*}
    \epsilon_{\cX}(f_\theta(\cdot,&g_\phi(\cdot))) \leq 
    \gamma  \hat{\epsilon}_{\mathtt{enc}}(g_\phi)  +  
    \frac{1}{K}\sum_{k=1}^K \hat{\epsilon}_{c_k}(f_\theta(\cdot,c_k))  + \bbE_{c \sim \cC}\left[  W_1\left(\hat{\mu}_c,\frac{1}{K} \sum_{k=1}^K\hat{\mu}_{c_k}\right) + \lambda_c 
    \right] \\
    +  & 2\gamma R(\mathcal{G}) + 2 R(\cF)  + (\gamma + 1) \sqrt{\log(1/\delta)/(2nK)}  + \sqrt{2 \log (1/\delta)} \left( \sqrt{1/n} + \sqrt{1/(nK)} \right).
\vspace{-2mm}
\end{align*}
\label{thm:main-combine}
\vspace{-10pt}
\end{prop}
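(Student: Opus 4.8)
The bound is assembled in five stages: (i) use the Lipschitz dependence of the forecaster on the task variable to split the composite error into a forecaster term and an encoder term; (ii) decompose the forecaster term over the task partition of $\cX$; (iii) transfer each per-task population risk to the empirical source mixture through a Wasserstein domain-adaptation inequality; (iv) pass from population to empirical risks for the forecaster and the encoder via Rademacher symmetrization; (v) replace the population Wasserstein distances by their empirical counterparts and collect all the deviation terms. For stage (i), fix the training samples and the weights $\theta,\phi$. Since $c\mapsto f_\theta(\cdot,c)$ is $\gamma$-Lipschitz uniformly in $\theta$ and the loss has Lipschitz constant $l\le 1/2$, replacing the true task $g(x)$ by the estimate $g_\phi(x)$ perturbs the incurred loss, pointwise in $x$, by at most $\gamma$ times the encoder discrepancy at $x$ (the factor $2l\le 1$ being absorbed). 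Taking $\bbE_{x\sim\cX}$ gives $\epsilon_{\cX}(f_\theta(\cdot,g_\phi(\cdot)))\le \epsilon_{\cX}(f_\theta(\cdot,g(\cdot)))+\gamma\,\epsilon_{\mathtt{enc}}(g_\phi)$.

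For stage (ii), because $\cX=\cup_{c\in\cC}\cX_c$ and $\mu$ factors as $c\sim\cC$ then $x\sim\mu_c$, we have $\epsilon_{\cX}(f_\theta(\cdot,g(\cdot)))=\bbE_{c\sim\cC}\big[\epsilon_c(f_\theta(\cdot,c))\big]$, and it suffices to bound the integrand uniformly in $c$ so that the bound survives the outer expectation. For stage (iii), fix $c$ and write $\bar\mu_c=\tfrac1K\sum_{k=1}^K\mu_{c_k}$. For any hypothesis $f$ whose composition with the loss is $1$-Lipschitz in the spatial input $x$ — guaranteed by $l\le 1/2$ together with the Lipschitz continuity of $f_\theta$ in $x$ — Kantorovich--Rubinstein duality gives $|\epsilon_c(f)-\epsilon_{\bar\mu_c}(f)|\le W_1(\mu_c,\bar\mu_c)$; adding and subtracting the ideal joint hypothesis and using the triangle inequality for the loss yields the standard adaptation estimate $\epsilon_c(f)\le \tfrac1K\sum_k\epsilon_{c_k}(f)+W_1(\mu_c,\bar\mu_c)+\lambda_c$, applied here with $f=f_\theta(\cdot,c)$.

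For stage (iv), we replace $\tfrac1K\sum_k\epsilon_{c_k}(f_\theta(\cdot,c_k))$ by $\tfrac1K\sum_k\hat\epsilon_{c_k}(f_\theta(\cdot,c_k))$ and $\epsilon_{\mathtt{enc}}(g_\phi)$ by $\hat\epsilon_{\mathtt{enc}}(g_\phi)$ by uniform convergence over $\cF$ and $\mathcal{G}$: symmetrization over the $nK$ forecaster samples produces the $2R(\cF)$ term with a McDiarmid deviation of order $\sqrt{\log(1/\delta)/(2nK)}$, the encoder contributing $2\gamma R(\mathcal{G})$ with deviation $\gamma\sqrt{\log(1/\delta)/(2nK)}$, and a union bound merges the two into $(\gamma+1)\sqrt{\log(1/\delta)/(2nK)}$ (the constant change in $\delta$ being negligible for large $n$). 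For stage (v), we replace $W_1(\mu_c,\bar\mu_c)$ by $W_1(\hat\mu_c,\tfrac1K\sum_k\hat\mu_{c_k})$: on a bounded domain the maps sending the $n$ samples of task $c$ (resp. the $nK$ mixture samples) to this empirical distance have bounded differences of order $1/n$ (resp. $1/(nK)$), so McDiarmid gives concentration around the mean at rates $\sqrt{2\log(1/\delta)/n}$ and $\sqrt{2\log(1/\delta)/(nK)}$, while for large $n$ the bias between the mean empirical Wasserstein distance and its population value is of lower order and absorbed; this accounts for the final $\sqrt{2\log(1/\delta)}\big(\sqrt{1/n}+\sqrt{1/(nK)}\big)$. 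Assembling (i)--(v), taking a union bound over the $O(1)$ high-probability events, and integrating the per-$c$ bound against $c\sim\cC$ — legitimate because all forecaster and encoder estimates are uniform in $c$ — yields the stated inequality.

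\textbf{Main obstacle.} The delicate points are stages (iii) and (v). The Wasserstein adaptation inequality requires the composition of the loss with \emph{every} forecaster in the class to be $1$-Lipschitz in the spatial input, which is a genuine structural assumption hidden inside ``$l\le 1/2$'' and the stated Lipschitz hypothesis; and the passage from population to empirical $W_1$ is dimension-dependent in general, so one must lean on the boundedness of $\cX$ and the ``$n$ large'' regime to render the empirical-process bias negligible rather than control it for all $n$. Keeping the interaction between the outer expectation $\bbE_{c\sim\cC}$ and the sample-dependent high-probability statements coherent, and bookkeeping every Lipschitz constant through the nested composition $f_\theta(\cdot,g_\phi(\cdot))$, is the remaining work.
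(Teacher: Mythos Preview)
Your plan is correct and follows essentially the same route as the paper: decompose via Lipschitz continuity in $c$ into an encoder term plus a per-task forecaster term (the paper's Proposition~\ref{prop:decomp_error}), transfer each target-task risk to the source mixture via a Wasserstein domain-adaptation bound, and pass to empirical risks with multi-task Rademacher bounds (the paper's Theorem~\ref{thm:main-jointbound}) for both $\cF$ and $\mathcal{G}$. The only difference is packaging: the paper invokes \cite{redko2017theoretical} as a black box that already delivers the empirical $W_1$ together with the $\sqrt{2\log(1/\delta)}\big(\sqrt{1/n}+\sqrt{1/(nK)}\big)$ concentration term, whereas you reconstruct that result from Kantorovich--Rubinstein duality plus McDiarmid in your stages (iii) and (v)---a valid but unnecessary detour.
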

This result helps to quantify the trade-offs with respect to different terms in the error bound in our two-part architecture and the error of the model can be controlled by minimizing the empirical errors of the encoder and forecaster. See the Appendix Proposition \ref{prop:final} for full proofs.

%%%%%%%%%%%%%%%%%%%%%%%%%%%%%%%%%%%%%%%%%
% \section{Theoretical Analysis}\label{main_theory}
% \input{sections/main_theory}

%%%%%%%%%%%%%%%%%%%%%%%%%%%%%%%%%%%%%%%%%
\section{Related Work}

% applied deep neural networks to solve PDEs automatically but these approaches require explicit input of boundary conditions during inference, which are generally not available in real-time. \citet{Shi2017DeepLF} performed a benchmark study and proposed a new model on precipitation nowcasting. \citet{Long2019PDENet2L} proposed a model to discover (time-dependent) PDEs from observed dynamic data. 

\textbf{Learning Dynamical Systems.}
Deep learning models are gaining popularity for learning dynamical systems \citep{Shi2017DeepLF,chen2018neural,manek2020learning, Azencot2020ForecastingSD, xie2018tempogan, eulerian, pfaff2021learning}. An emerging topic is physics-informed deep learning \citep{raissi2017physics, azencot2020forecasting, PDE-CDNN, Wang2020symmetry, ayed2019learning, Ayed2019LearningDS,wang2020towards, dona2021pdedriven,daw2021pid} which integrates inductive biases from physical systems to improve learning. For example,  \cite{Morton2018Deep, azencot2020forecasting} incorporated Koopman theory into the architecture. \cite{mazier1} used deep neural networks to solve PDEs with physical laws enforced in the loss functions. \cite{Greydanus2019HamiltonianNN} and \cite{Cranmer2020LagrangianNN} build models upon Hamiltonian and Lagrangian mechanics that respect conservation laws. \cite{wang2020towards} proposed a hybrid approach by marrying two well-established turbulent flow simulation techniques with deep learning to produce a better prediction of turbulence. \cite{daw2021pid} propose a physics-informed GAN architecture that uses physics knowledge to inform the learning of both the generator and discriminator models.  However, these approaches deal with a \textit{specific} system dynamics instead of  a large \textit{heterogeneous} domain. %\ry{reduce this text}
%\ry{need more references, self-reference is bit too much} %There are also physics-informed deep learning models \cite{}. For instance,  \citet{Cormorant} designed rotationally covariant neural network for learning molecular systems. \cite{xie2018tempogan} and \cite{eulerian} developed deep learning models in the context of fluid flow animation, where the prediction and physical consistency is less critical. 

\textbf{Multi-task learning and Meta-learning.}
Multi-task learning \citep{Vandenhende2021MultiTaskLF} focuses on learning shared representations from multiple related tasks. Architecture-based MTL methods can be categorized into encoder-focused \citep{Liu2019EndToEndML} and decoder-focused \citep{Xu2018PADNetMG}. There are also optimization-based MTL methods, such as task balancing methods \citep{Kendall2018MultitaskLU}. But MTL assumes tasks are known a priori instead of inferring the task from data.   
On the other hand, the aim of meta-learning \citep{thrun1998learning} is to leverage the shared representation to fast adapt to unseen tasks. Based on how the meta-level knowledge is extracted and used, meta-learning methods are classified into model-based \citep{Munkhdalai2017MetaN, Alet2018ModularM,  oreshkin2019n, Seo2020PhysicsawareSM, zhou2021metalearning}, metric-based \citep{Vinyals2016MatchingNF, Snell2017PrototypicalNF} and gradient-based \citep{Finn2017ModelAgnosticMF, rusu2018metalearning, Grant2018RecastingGM, yao2019hierarchically}. Most meta-learning approaches are not designed for  forecasting  with a few exceptions. \cite{wilson2020multi} proposed to train a domain classifier with weak supervision to help domain adaptation but focused on low-dimensioal time series classification problems. \cite{oreshkin2019n} designed a residual architecture for time series forecasting with a meta-learning parallel. \cite{Alet2018ModularM} proposed a modular meta-learning approach for continuous control. But forecasting physical dynamics poses unique challenges to meta-learning as it requires encoding physical knowledge into our model.

\textbf{Style Transfer.}
Our approach is inspired by neural style transfer techniques.  In style transfer, a generative network is controlled by an external style vector through adaptive instance normalization between convolutional layers. Our hidden representation bears affinity with the ``style'' vector in style transfer techniques. Rather than aesthetic style in images, our hidden representation encodes  time-invariant features.  Style transfer initially appear in non-photorealistic rendering \citep{kyprianidis2012state}. Recently, neural style transfer \citep{jing2019neural} has been applied to image synthesis \citep{gatys2016image, Liu2021SemisupervisedMW},  videos generation \citep{ruder2016artistic}, and language translation \citep{prabhumoye2018style}. For dynamical systems, \cite{sato2018example} adapts texture synthesis to transfer the style of turbulence for animation. \cite{kim2019deep} studies unsupervised generative modeling of turbulent flows but for super-resolution reconstruction rather than forecasting. 

\textbf{Video Prediction.}
Our work is also related to video prediction. Conditioning on the historic observed frames, video prediction models are trained to predict future frames, e.g., \citep{mathieu2015deep, finn2016unsupervised, xue2016visual, villegas2017decomposing, Oprea2020ARO, finn2016unsupervised, wang2017predrnn, wang2021predrnn, leguen20phydnet, Massague2020LearningDR, lin2020preserving}. There is also conditional video prediction \citep{oh2015action} which achieves controlled synthesis. Many of these models are trained on natural videos  from unknown physical processes. Our work is substantially different because we do not attempt to predict object or camera motions. However, our method can be potentially combined with video prediction models to improve generalization.

%%%%%%%%%%%%%%%%%%%%%%%%%%%%%%%%%%%%%%%%%
\section{Experiments}
\begin{figure*}[htbp]
	\centering
	\includegraphics[width=0.46\textwidth]{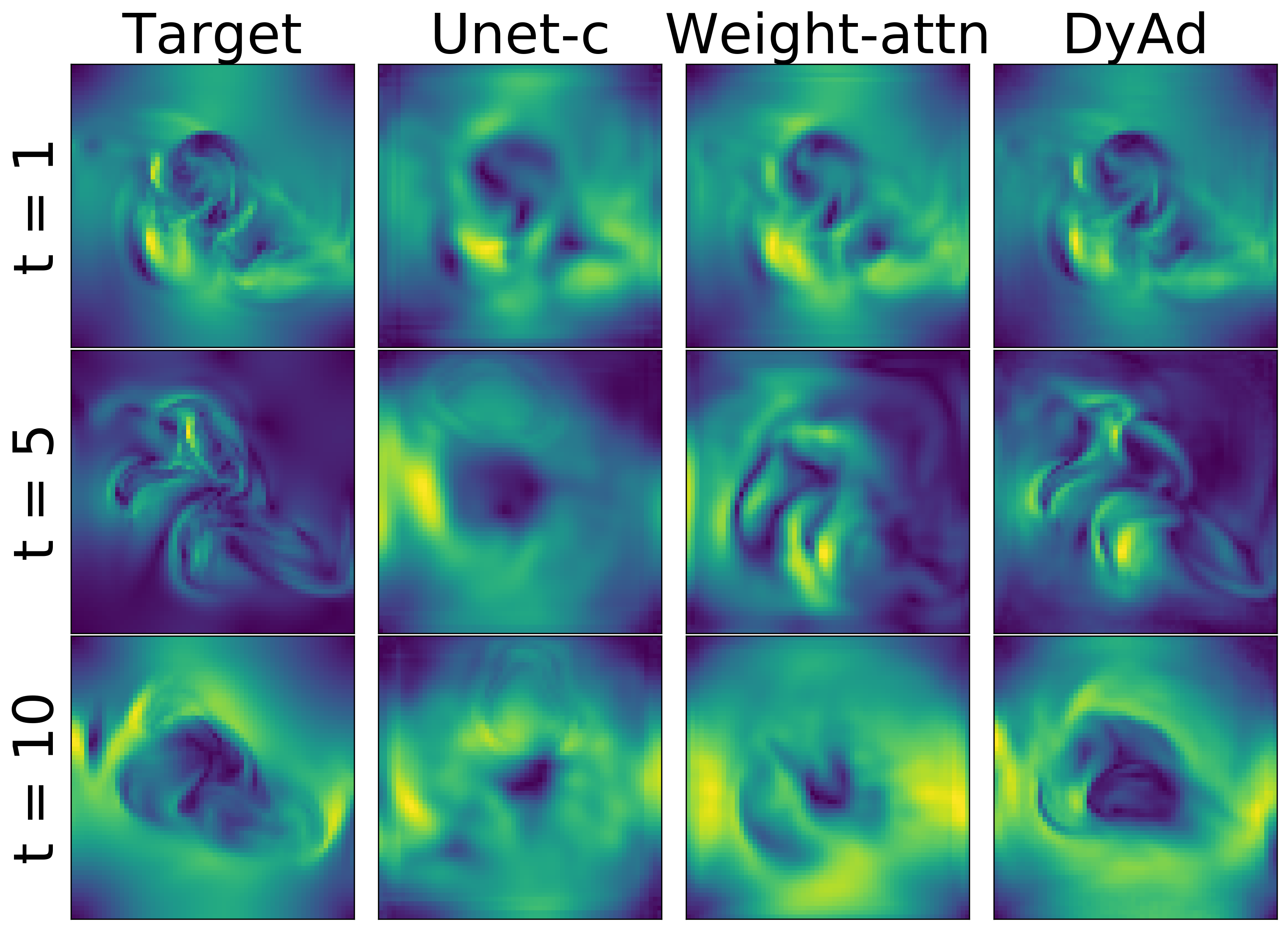} \hspace{5mm}
	\includegraphics[width=0.46\textwidth]{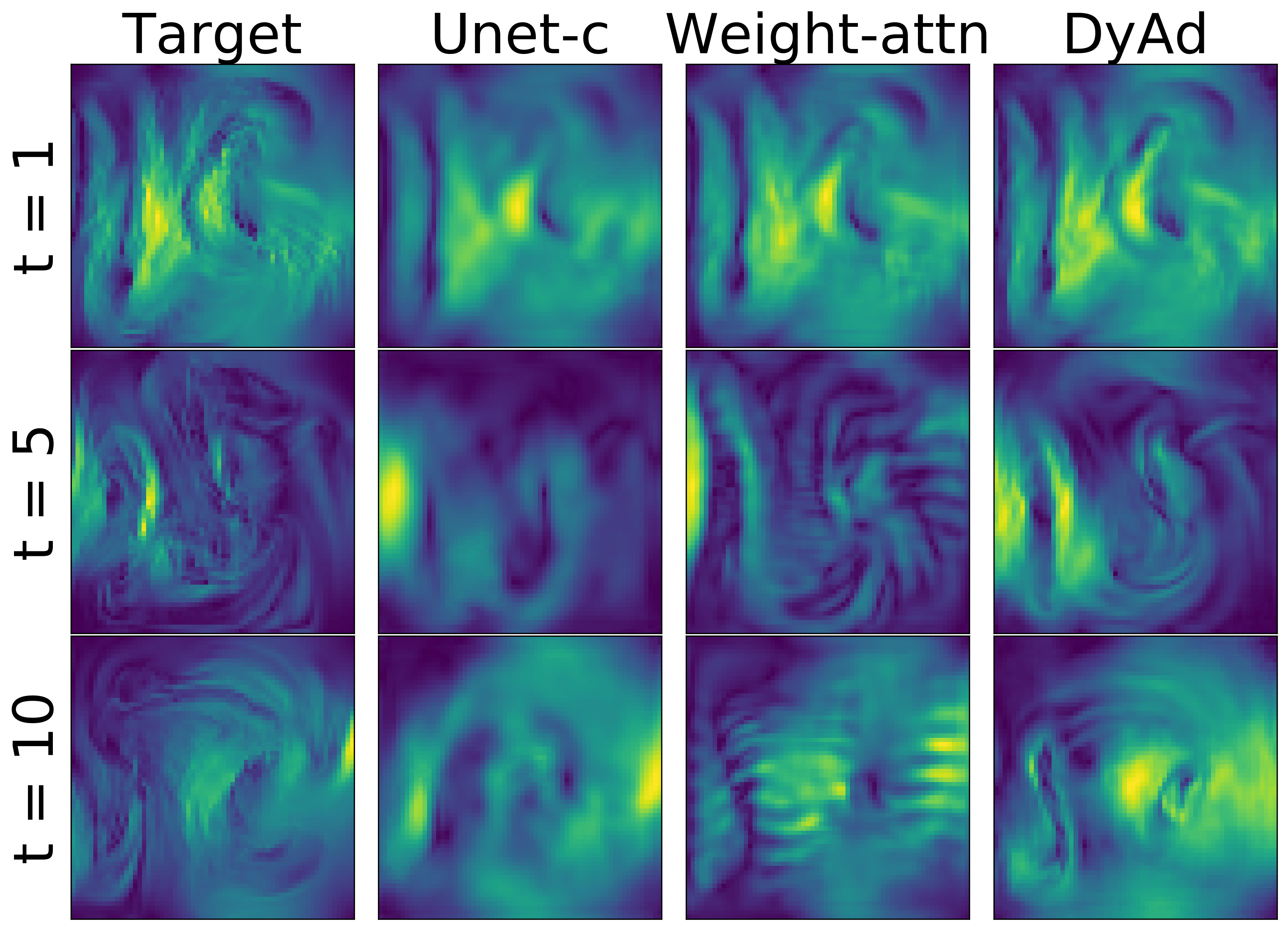}
	\caption{Target and predictions by \texttt{Unet-c}, \texttt{Modular-wt} and \ours{} at time 1, 5, 10 for turbulent flows with buoyancy factors 9 (left) and 21 (right) respectively. \ours{} can easily generate predictions for various flows while baselines have trouble understanding and disentangling buoyancy factors.}
	\label{vis:turbulence}
	\vspace{-3mm}
\end{figure*}

\newcolumntype{P}[1]{>{\centering\arraybackslash}p{#1}}
\begin{table*}[ht!]
\small
    \centering
    \caption{Prediction RMSE on the turbulent flow and sea surface temperature datasets. Prediction RMSE and ESE (energy spectrum errors) on the future and domain test sets of ocean currents dataset.}
    \label{tab:results}
    \begin{tabular}{P{1.7cm}P{1cm}P{1cm}P{1.1cm}P{1.1cm}P{2.63cm}P{2.63cm}}\toprule
    \multirow{2}{*}{Model} &  \multicolumn{2}{c}{$\textrm{Turbulent Flows}$} & \multicolumn{2}{c}{$\textrm{Sea Temperature}$} & \multicolumn{2}{c}{$\textrm{Ocean Currents}$} \\
\cmidrule(lr){2-3} \cmidrule(lr){4-5} \cmidrule(lr){6-7} 
      & future & domain & future & domain  & future & domain   \\
           \midrule
        \texttt{ResNet}   & 0.94\scriptsize{$\pm$0.10} &  0.65\scriptsize{$\pm$0.02} & 0.73\scriptsize{$\pm$0.14} & 0.71\scriptsize{$\pm$0.16} & 9.44\scriptsize{$\pm$1.55}\small{ | 0.99}\scriptsize{$\pm$0.15} & 9.65\scriptsize{$\pm$0.16}\small{ | 0.90}\scriptsize{$\pm$0.16} \\
        \texttt{ResNet-c}   & 0.88\scriptsize{$\pm$0.03} &  0.64\scriptsize{$\pm$0.01} & 0.70\scriptsize{$\pm$0.08} & 0.71\scriptsize{$\pm$0.06}  & 9.71\scriptsize{$\pm$0.01}\small{ | 0.81}\scriptsize{$\pm$0.03} & 9.15\scriptsize{$\pm$0.01}\small{ | 0.73}\scriptsize{$\pm$0.03} \\
        \texttt{U-Net}   & 0.92\scriptsize{$\pm$0.02} &  0.68\scriptsize{$\pm$0.02} & 0.57\scriptsize{$\pm$0.05} & 0.63\scriptsize{$\pm$0.05} & 7.64\scriptsize{$\pm$0.05}\small{ | 0.83}\scriptsize{$\pm$0.02} & 7.61\scriptsize{$\pm$0.14}\small{ | 0.86}\scriptsize{$\pm$0.03}\\
        \texttt{Unet-c}   & 0.86\scriptsize{$\pm$0.07} &  0.68\scriptsize{$\pm$0.03} & 0.47\scriptsize{$\pm$0.02} & 0.45\scriptsize{$\pm$0.06} & \textbf{7.26\scriptsize{$\pm$0.01}}\small{ | 0.94}\scriptsize{$\pm$0.02} & 7.51\scriptsize{$\pm$0.03}\small{ | 0.87}\scriptsize{$\pm$0.04}\\
        \texttt{PredRNN} & 0.75\scriptsize{$\pm$0.02} &  0.75\scriptsize{$\pm$0.01} & 0.67\scriptsize{$\pm$0.12} & 0.99\scriptsize{$\pm$0.07}  &   8.49\scriptsize{$\pm$0.01}\small{ | 1.27}\scriptsize{$\pm$0.02} & 8.99\scriptsize{$\pm$0.03}\small{ | 1.69}\scriptsize{$\pm$0.01} \\
        \texttt{VarSepNet}   &  0.67\scriptsize{$\pm$0.05}  & 0.63\scriptsize{$\pm$0.06}  & 0.63\scriptsize{$\pm$0.14} & 0.49\scriptsize{$\pm$0.09} & 9.36\scriptsize{$\pm$0.02}\small{ | 0.63}\scriptsize{$\pm$0.04} &  7.10\scriptsize{$\pm$0.01}\small{ | 0.58}\scriptsize{$\pm$0.02}\\  
        % \texttt{Mod-ind} & 1.13\scriptsize{$\pm$0.03} & --------   & 1.25\scriptsize{$\pm$0.35} & -------- & 10.1\scriptsize{$\pm$0.27}\small{ | 0.97}\scriptsize{$\pm$0.16} & ----------- \\
        \texttt{Mod-attn} & 0.63\scriptsize{$\pm$0.12} & 0.92\scriptsize{$\pm$0.03} & 0.89\scriptsize{$\pm$0.22} & 0.98\scriptsize{$\pm$0.17} & 8.08\scriptsize{$\pm$0.07}\small{ | 0.76}\scriptsize{$\pm$0.11} & 8.31\scriptsize{$\pm$0.19}\small{ | 0.88}\scriptsize{$\pm$0.14}\\    
        \texttt{Mod-wt}  & 0.58\scriptsize{$\pm$0.03} &  0.60\scriptsize{$\pm$0.07} & 0.65\scriptsize{$\pm$0.08} &  0.64\scriptsize{$\pm$0.09} & 10.1\scriptsize{$\pm$0.12}\small{ | 1.19}\scriptsize{$\pm$0.72} & 8.11\scriptsize{$\pm$0.19}\small{ | 0.82}\scriptsize{$\pm$0.19}\\ 
        \texttt{MetaNet} &  0.76\scriptsize{$\pm$0.13} & 0.76\scriptsize{$\pm$0.08} & 0.84\scriptsize{$\pm$0.16} & 0.82\scriptsize{$\pm$0.09} & 10.9\scriptsize{$\pm$0.52}\small{ | 1.15}\scriptsize{$\pm$0.18} & 11.2\scriptsize{$\pm$0.16}\small{ | 1.08}\scriptsize{$\pm$0.21} \\  
        \texttt{MAML}   & 0.63\scriptsize{$\pm$0.01}  & 0.68\scriptsize{$\pm$0.02} & 0.90\scriptsize{$\pm$0.17}  & 0.67\scriptsize{$\pm$0.04} & 10.1\scriptsize{$\pm$0.21}\small{ | 0.85}\scriptsize{$\pm$0.06} & 10.9\scriptsize{$\pm$0.79}\small{ | 0.99}\scriptsize{$\pm$0.14}\\  
        
        \midrule
        \texttt{DyAd+ResNet} & \textbf{0.42\scriptsize{$\pm$0.01}} &  \textbf{0.51\scriptsize{$\pm$0.02}} & 0.42\scriptsize{$\pm$0.03} & 0.44\scriptsize{$\pm$0.04} & 7.28\scriptsize{$\pm$0.09}\textbf{\small{ | 0.58}\scriptsize{$\pm$0.02}} & \textbf{7.04\scriptsize{$\pm$0.04}\small{ | 0.54}\scriptsize{$\pm$0.03}} \\
        \texttt{DyAd+Unet}   & 0.58\scriptsize{$\pm$0.01} &  0.59\scriptsize{$\pm$0.01} &  \textbf{0.35\scriptsize{$\pm$0.03}} &  \textbf{0.42\scriptsize{$\pm$0.05}}  & 7.38\scriptsize{$\pm$0.01\small{ | 0.70}\scriptsize{$\pm$0.04}} & 7.46\scriptsize{$\pm$0.02}\small{ | 0.70}\scriptsize{$\pm$0.07}\\
%        \texttt{DyAd+BigUnet} & 0.57\scriptsize{$\pm$0.02}&  0.59\scriptsize{$\pm$0.01} &  0.36\scriptsize{$\pm$0.01}&  0.43\scriptsize{$\pm$0.02}& 7.72\scriptsize{$\pm$0.02}  | \small{0.70}\scriptsize{$\pm$0.04}& 7.84\scriptsize{$\pm$0.10} | \small{0.79}\scriptsize{$\pm$0.02} \\
        \bottomrule
    \end{tabular}
\end{table*}

\subsection{Datasets}
We experiment on three datasets: synthetic turbulent flows, real-world {sea surface temperature} and {ocean currents} data. These are difficult to forecast using numerical methods due to unknown external forces and complex dynamics not fully captured by simplified mathematical models. 
%For the weak supervision $c$, we use the mean vorticity for turbulence and ocean currents, and mean temperature for sea surface temperature. 

\textbf{Turbulent Flow with Varying Buoyancy.} We generate a synthetic dataset of turbulent flows with a numerical simulator, PhiFlow\footnote[1]{\url{https://github.com/tum-pbs/PhiFlow}}. It contains 64$\times$64 velocity fields of turbulent flows in which we vary the buoyant force  acting on the fluid from 1 to 25. Each buoyant force corresponds to a forecasting task and there are 25 tasks in total. We use the mean vorticity of each task as partial supervision $c$ as we can directly calculate it from the data. Vorticity can characterize formation and circular motion of turbulent flows. 

\textbf{Sea Surface Temperature.} We evaluate on a real-world sea surface temperature data generated by the NEMO ocean engine \citep{madec2015nemo}\footnote[2]{The data are available at \url{https://resources.marine.copernicus.eu/?option=com_csw&view=details&product_id=GLOBAL_ANALYSIS_FORECAST_PHY_001_024}}. We select an area from Pacific ocean range from 01/01/2018 to 12/31/2020. The corresponding latitude and longitude are  (-150$\sim$-120, -20$\sim$-50). This area is then divided into 25 64$\times$64 subregions, each is a task since the mean temperature varies a lot along longitude and latitude. For the encoder training, we use season as an additional supervision signal besides the mean temperature of each subregion. In other words, the encoder should be able to infer the mean temperature of the subregion as well as to classify four seasons given the temperature series. 

\textbf{Ocean Currents.} We also experiment with the velocity fields of ocean currents from the same region and use the same task division as the sea surface temperature data set. Similar to the turbulent flow data set, we use the mean vorticity of each subregion as the weak-supervision signal.

\subsection{Baselines}
We include several SoTA baselines from meta-learning and dynamics forecasting.
\begin{itemize}[noitemsep,topsep=2pt,parsep=2pt,partopsep=2pt,leftmargin=*]
    \item \texttt{ResNet} \citep{he2016deep}: A widely adopted video prediction model \citep{Wang2020symmetry}.
    \item \texttt{U-net} \citep{unet}: Originally developed for biomedical image segmentation, adapted for  dynamics forecasting \citep{PDE-CDNN}.
    \item \texttt{ResNet/Unet-c}: Above \texttt{ResNet} and \texttt{Unet} with an additional final layer that generates task parameter $c$ and trained with weak-supervision and forecasting loss altogether. 
    \item \texttt{PredRNN} \citep{wang2017predrnn}: A state-of-art RNNs-based spatiotemporal forecasting model. 
    \item \texttt{VarSepNet} \citep{dona2021pdedriven}: A convolutional dynamics forecasting model based on spatiotemporal disentanglement.
    % \item \texttt{Mod-ind}:  A simple baseline with independent CNN modules trained for different tasks that cannot be adapted to new tasks.
    \item \texttt{Mod-attn}\citep{Alet2018ModularM}: A modular meta-learning  method which combines the outputs of modules to generalize to new tasks using attention.
    \item \texttt{Mod-wt}: A modular meta-learning variant which uses attention weights to combine the parameters of the convolutional kernels in different modules for new tasks. 
    \item \texttt{MetaNet} \citep{Munkhdalai2017MetaN}: A model-based meta-learning method which requires a few samples from test tasks as a support set to adapt. 
    \item \texttt{MAML} \citep{Finn2017ModelAgnosticMF}: A optimization-based meta-learning approach. We replaced the original classifier with a ResNet for regression.
\end{itemize} 

Note that both \texttt{ResNet-c} and \texttt{Unet-c} have access to task parameters $c$. \texttt{Modular-attn} has a convolutional encoder $f$ that takes the same input $x$ as each module $M$ to generate attention weights, $\sum_{l=1}^m\frac{\exp[f(x)(l)]}{\sum_{k=1}^m \exp[f(x)(k)]}M_l(x)$. \texttt{Modular-wt} also has the same encoder but to generate weights for combining the convolution parameters of all modules. \texttt{MetaNet} requires samples from test tasks as a support set and \texttt{MAML} needs adaptation retraining on test tasks, while other models do not need any information from the test domains. Thus, we use additional samples of up to 20\% of the test set from test tasks. \texttt{MetaNet} uses these as a support set. \texttt{MAML} is retrained on these samples for 10 epoch for adaptation. To demonstrate the generalizability of \ours{}, we experimented with \texttt{ResNet} and \texttt{U-net} as our forecaster.

% \vspace{-6pt}
\subsection{Experiments Setup}
For all datasets,  we use a sliding-window approach to generate samples of sequences. We evaluate on two scenarios of generalization. For test-future, we train and test on the same task but different time steps. For test-domain,  we train and test on different tasks with an 80-20 split.  All models are trained to make next step prediction given the history. We forecast in an autoregressive manner to generate multi-step ahead predictions.  All results are averaged over 3 runs with random initialization. 

Apart from the root mean square error (RMSE), we also report the energy spectrum error (ESE) for ocean current prediction, which quantifies the physical consistency. ESE indicates whether the predictions preserve the correct statistical distribution and obey the energy conservation law, which is a critical metric for physical consistency. See details about energy spectrum and complete experiments details in Appendix \ref{app:exp_details}. 

% \begin{figure}[ht!]
% \centering
% \includegraphics[width= 0.3\textwidth]{}
% \caption{Energy Spectrum plot}
% \label{spectrum}
% \end{figure}
% \vspace{-8pt}
% \subsection{Evaluation Metrics}
% Apart from the root mean square error (RMSE), we also report the energy spectrum error (ESE) for ocean current prediction which quantifies the physical consistency. 
% % ESE indicates whether the predictions preserve the correct statistical distribution and obey the energy conservation law, which is a critical metric for physical consistency. 
% The turbulence kinetic energy spectrum $E(k)$ is related to the mean turbulence kinetic energy as 
% \begin{align*}
% \int_{0}^{\infty}E(k)dk = (\overline{(u^{'})^2} + \overline{(v^{'})^2})/2, \;\;  \overline{(u^{'})^2} = \frac{1}{T}\sum_{t=0}^T(u(t) - \bar{u})^2.
% \end{align*}

% where the $k$ is the wavenumber and $t$ is the time step. 
% %Figure \ref{spectrum} shows a theoretical turbulence kinetic energy spectrum plot. 
% The spectrum can describe the transfer of energy from large scales of motion to the small scales and provides a representation of the dependence of energy on frequency.
% Thus, the ESE can indicate whether the predictions preserve the correct statistical distribution and obey the energy conservation law.  
%A trivial example that can illustrate why we need ESE is that if a model simply outputs moving averages of input frames, the accumulated RMSE of predictions might not be high but the ESE would be really big because all the small or even medium eddies are smoothed out.
% \end{center}
\begin{figure}[t!]
\begin{floatrow}
\ffigbox[0.45\textwidth]
{
\captionof{table}{Ablation study: prediction RMSE of \ours{} and its variations with different components removed from \ours{}. }\label{tab:ablation}
}
{\centering
\vspace{-10pt}
    
    \begin{tabular}{P{1.7cm}P{1.4cm}P{1.4cm}}\toprule
        Model & future & domain \\
        \midrule
        \ours (ours)  & \textbf{0.42\scriptsize{$\pm$0.01}} &  \textbf{0.51\scriptsize{$\pm$0.02}}\\
        \texttt{No\_enc} & 0.63\scriptsize{$\pm$0.03} & 0.60\scriptsize{$\pm$0.02}\\
         \texttt{No\_AdaPad} & 0.47\scriptsize{$\pm$0.01} & 0.54\scriptsize{$\pm$0.02}\\
        \texttt{Wrong\_enc} & 0.66\scriptsize{$\pm$0.02} &  0.62\scriptsize{$\pm$0.03}\\
        \texttt{End2End} & 0.45\scriptsize{$\pm$0.01} &  0.54\scriptsize{$\pm$0.01}\\
        \bottomrule
    \end{tabular}
}
\ffigbox[0.54\textwidth]
{%
\includegraphics[width=0.54\textwidth]{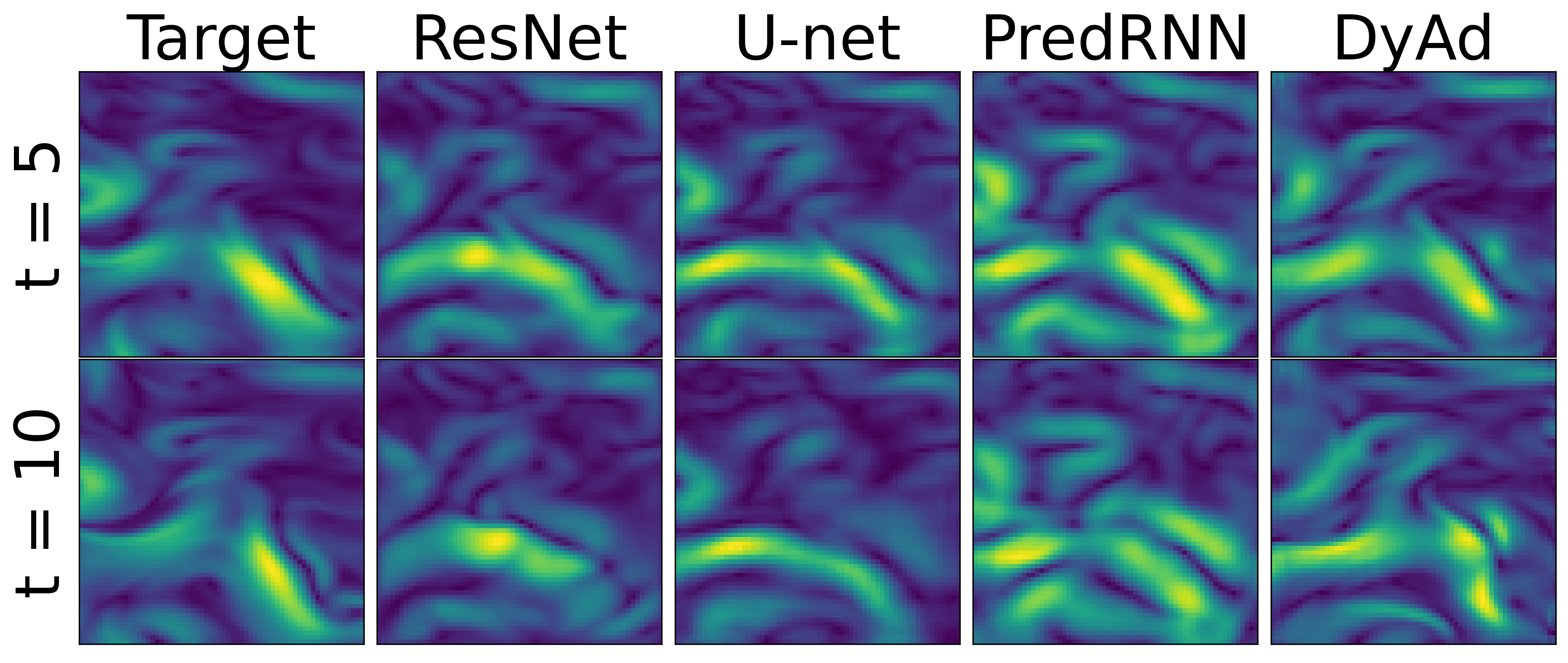}
\vspace{-15pt}
}{
\caption{\ours{}, \texttt{ResNet}, \texttt{U-net}, \texttt{PredRNN} velocity norm ($\sqrt{u^2+v^2}$) predictions on an ocean current sample in the future test set.}\label{oc_future}}
\end{floatrow}
\end{figure}

\begin{wrapfigure}{r}{5.7cm}
\centering
\vspace{-65pt}
\includegraphics[width=\textwidth]{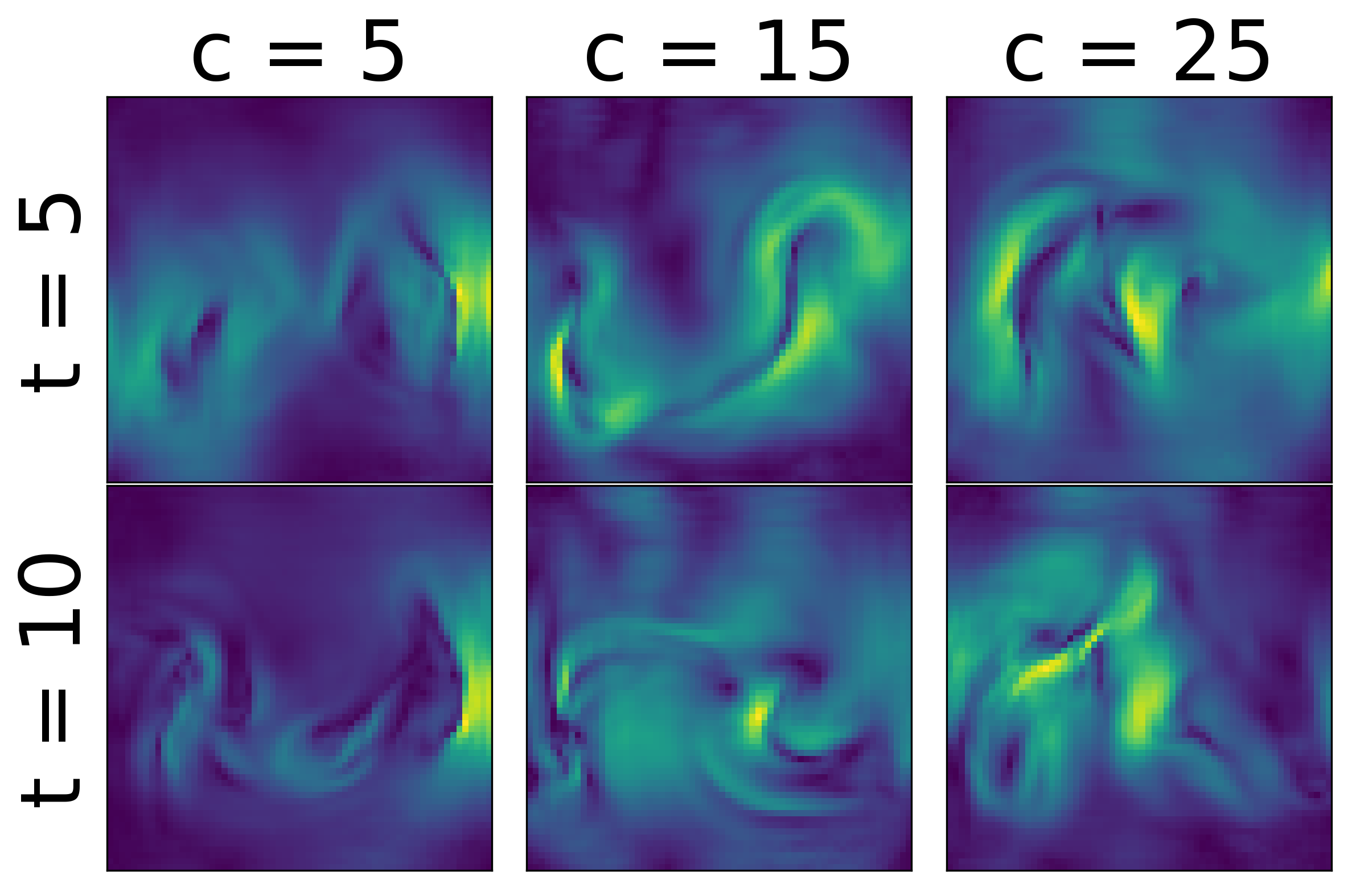}
	\caption{Outputs from \ours{} while we vary encoder input but keep the forecaster input fixed. From left to right, the encoder is fed with flow with different buoyancy factor $c = 5, 15, 25$. the forecaster network input has fixed buoyancy $c = 15$.}
	\label{vis:vary_c}
\vspace{-5pt}
\end{wrapfigure}

% \begin{figure*}[htb!]
% \includegraphics[width=0.55\textwidth]{figs/ocean_future.png}\hspace{8mm}
% \includegraphics[width=0.35\textwidth]{figs/vary_c_1.png}
% \caption{Left: \ours{}, \texttt{ResNet}, \texttt{U-net}, \texttt{PredRNN} velocity norm ($\sqrt{u^2+v^2}$) predictions on an ocean current sample in the future test set. Right: Outputs from \ours{} while we vary encoder input but keep the forecaster input fixed. From left to right, the encoder is fed with flow with different buoyancy factor $c = 5, 15, 25$. the forecaster network input has fixed buoyancy $c = 15$.}
% \label{vis:oc_future_vary_c}
% \end{figure*}

% \begin{table}[ht!]
% \caption{Ablation study: prediction RMSE of \ours{} and its variations with different components removed from \ours{}.}\label{tab:ablation_model}
% \centering
% \begin{tabular}{P{2cm}P{1.7cm}P{1.7cm}}\toprule
%         Model & future & domain \\
%         \midrule
%         \ours\;(ours)  & \textbf{0.42$\pm$0.01} &  \textbf{0.51$\pm$0.02}\\
%         \texttt{No\_enc} & 0.63$\pm$0.03 & 0.60$\pm$0.02\\
%          \texttt{No\_AdaPad} & 0.47$\pm$0.01 & 0.54$\pm$0.02\\
%         \texttt{Wrong\_enc} & 0.66$\pm$0.02 &  0.62$\pm$0.03\\
%         \texttt{End2End} & 0.45$\pm$0.01 &  0.54$\pm$0.01\\
%         \bottomrule
%     \end{tabular}

% \end{table}
% \vspace{-8pt}

\section{Results}
\subsection{Prediction Performance.}
Table \ref{tab:results} shows the RMSE of multi-step predictions on Turbulent Flows (20 steps),  Sea Surface Temperature (10 steps), and Ocean Currents (10 step) in two testing scenarios. We observe that \ours{} makes the most accurate predictions in both scenarios across all datasets. Comparing \texttt{ResNet/Unet-c} with \texttt{DyAd}, we observe the clear advantage of task inference with separate training. \texttt{VarSepNet} achieves competitive performances on Ocean Currents (second best) through spatiotemporal disentanglement but cannot adapt to future data.  Table \ref{tab:results} also reports ESEs on real-world Ocean Currents. \ours{} not only has small RMSE but also obtains the smallest ESE, suggesting it captures the statistical distribution of ocean currents well. 

Figure \ref{vis:turbulence} shows the target and the predicted velocity norm fields ($\sqrt{u^2+v^2}$) by \texttt{Unet-c}, \texttt{Modular-wt} and \ours{} at time step 1, 5, 10 for Turbulent Flows with buoyancy factors 9  and 21  respectively. We can see that \ours{} can  generate realistic flows with the corresponding characteristics while the baselines have trouble understanding and disentangling the buoyancy factor. Figure \ref{oc_future} shows \ours{}, \texttt{ResNet}, \texttt{U-net}, \texttt{PredRNN} predictions on an ocean current sample in the future test set, and we see the shape of predictions by \ours{} is closest to the target. These results demonstrate that \ours{} not only forecasts well but also accurately captures the physical characteristics of the system.

\subsection{Controllable Forecast.}
\ours{} infers the hidden features from data, which allows direct control of the latent space in the forecaster. We vary the encoder input while keeping the forecaster input fixed. Figure \ref{vis:vary_c} right shows the forecasts from \ours{} when the encoder is fed with flows having different buoyancy factors $c = 5, 15, 25$. As expected, with higher buoyancy factors, the predictions from the forecaster become more turbulent.  This demonstrates that the encoder can successfully disentangle the latent representation of difference tasks,  and control the predictions of the forecaster.

\subsection{Ablation Study.} 
We performed several ablation studies of \ours{} on the turbulence dataset to understand the contribution of each model component.

\textbf{Ablation of the Model  and Training Strategy.}
We first performed ablation study of the model architecture, shown in Table \ref{tab:ablation}. We first remove the encoder from \ours{} while keeping the same forecaster network  (\texttt{No\_enc}). The resulting model degrades but still outperforms \texttt{ResNet}. This demonstrates the effectiveness of \texttt{AdaIN} and \texttt{AdaPad} for forecasting. We also tested \texttt{DyAd} with AdaIN only (\texttt{No\_AdaPad}), and the performance without AdaPad was slightly worse.

Another notable feature of our model is the ability to infer tasks with weakly supervised signals $c$. It is important to have a $c$ that is related to the task domain. As an ablative study, we fed the encoder in \ours{} with a random $c$, leading to \texttt{Wrong\_enc}. We can see that having the totally wrong supervision may slightly hurt the forecasting performance. We also trained the encoder and the forecaster in \ours{} altogether (\texttt{End2End}) but observed worse performance. This validates our hypothesis about the significance of domain partitioning and separate training strategy. 

\textbf{Ablation of Encoder Training Losses.}
We performed an ablation study of three training loss terms for the encoder to show the necessity of each loss term, shown in Table \ref{tab:ablation_loss} We tried training the encoder only with the supervision of task parameters (\texttt{Sup\_c}). We also tested training with the first two terms but without the magnitude loss (\texttt{Sup\_c+time\_inv}). We can see that missing any of three loss terms would make the encoder fail to learn the correct task-specific and time-invariant features.

\textbf{Alternatives to AdaIN.}
We also tested 5 different alternatives to AdaIN for injecting the hidden feature $z_c$ into the forecaster, and reported the results in Table \ref{tab:concat}. We tested concatenating $z_c$ to the input of the forecaster, using $z_c$ as the kernels of the forecaster, concatenating $z_c$ to the hidden states, adding $z_c$ to the hidden states, using $z_c$ as the biases in the convolutional layers in the forecaster. AdaIN worked better than any alternative we tested.

% \begin{wraptable}[ht!]
%     \centering
%     \caption{We tested concatenating $z_c$ to the input of the forecaster (ConcatI), using $z_c$ as the kernels of the forecaster (KGen), concatenating $z_c$ to the hidden states (ConcatH), adding $z_c$ to the hidden states(Sum), using $z_c$ as the biases in the convolutional layers (Bias). AdaIN worked better than any alternative we tested.}
%     \begin{tabular}{P{1cm}P{0.8cm}P{1cm}P{0.8cm}P{1cm}P{0.8cm}P{0.8cm}}\toprule
%     \textrm{RMSE} & \textrm{AdaIN} & \textrm{ConcatI} & \textrm{KGen} & \textrm{ConcatH} & \textrm{Sum} & \textrm{Bias}    \\
%     \midrule
%     \texttt{future} & 0.42  & 1.09   & 0.78  &0.75   & 0.84 &0.84 \\
%     \texttt{domain} & 0.51  & 0.85   & 0.78   & 0.74   & 0.80  &0.78 \\
%     \bottomrule
%     \end{tabular}
%     \label{tab:concat}
% \end{wraptable}

\begin{minipage}[c]{0.4\textwidth}
\centering
\begin{tabular}{P{2.3cm}P{1cm}P{0.9cm}}\toprule
        Model & future & domain \\
        \midrule
        \ours\;(ours)  & \textbf{0.42} & \textbf{0.51}\\
        \texttt{Sup\_c} & 0.53 & 0.56\\
        \texttt{Sup\_c+time\_inv} & 0.51 &  0.54\\
        \bottomrule
    \end{tabular}
\captionof{table}{Ablation study of the encoder training loss terms. We tried training the encoder only with the weak supervision (\texttt{Sup\_c}), without the magnitude loss (\texttt{Sup\_c+time\_inv}).}\label{tab:ablation_loss}
\end{minipage}\hspace{2mm}
\begin{minipage}[c]{0.6\textwidth}
\centering
    \begin{tabular}{P{1cm}P{0.7cm}P{0.7cm}P{0.7cm}P{0.7cm}P{0.7cm}P{0.7cm}}
    \toprule
    \textrm{RMSE} & \textrm{AdaIN} & \textrm{ConI} & \textrm{KGen} & \textrm{ConH} & \textrm{Sum} & \textrm{Bias} \\
    \midrule
    \texttt{future} & \textbf{0.42}  & 1.09   & 0.78  &0.75   & 0.84 &0.84 \\
    \texttt{domain} & \textbf{0.51}  & 0.85   & 0.78   & 0.74   & 0.80  &0.78 \\
    \bottomrule
    \end{tabular}
\captionof{table}{We tested concatenating $z_c$ to the input of the forecaster (ConI), using $z_c$ as the kernels of the forecaster (KGen), concatenating $z_c$ to the hidden states (ConH), adding $z_c$ to the hidden states(Sum), using $z_c$ as the biases in the convolutional layers (Bias). AdaIN worked better than any alternative we tested.}\label{tab:concat}
\end{minipage}

%%%%%%%%%%%%%%%%%%%%%%%%%%%%%%%%%%%%%%%%%
\section{Conclusion}
We propose a model-based meta-learning method, \ours{} to forecast physical dynamics. \ours{} uses an encoder to infer the parameters of the task and a prediction network to adapt and forecast giving the inferred task.  Our model can also leverage any weak supervision  signals that can help distinguish different tasks, allowing the incorporation of additional domain knowledge. On challenging  turbulent flow  prediction and real-world ocean temperature and currents forecasting tasks, we observe superior performance of our model  across heterogeneous dynamics. Future work would consider non-grid data such as flows on a graph or a sphere.

% For future work, our method may be extended to other formats of data and other networks besides \texttt{ResNet}.  %Our method is not limited to gridded data.
% For example, we may use the encoded characteristic latent variable to control and adapt graph neural networks or continuous convolutions to study dynamicals on graphs or point clouds.
%a variety of architectures outside of convolutional networks.  Dynamical systems are often studied on graphs or point clouds and predicted using models such as graph neural networks or continuous convolutions.  These networks may also be tuned during inference using $\adain$, $\adapad$ or other adaptive layers to specific parts of the domain.  

\begin{ack}
We are grateful to several anonymous reviewers for their comments which have helped us to improve this work.  %Any other helpful conversations? 
This work was supported in part by U.S. Department Of
Energy, Office of Science  grant DE-SC0022255, U. S. Army Research Office
 grant W911NF-20-1-0334,
and NSF grants \#2134274 and \#2146343.
R. Walters was supported by the Roux Institute and the Harold Alfond Foundation and NSF grants \#2107256 and \#2134178.  
%Computational Resources
This research used resources of the National Energy Research
Scientific Computing Center, a DOE Office of Science User Facility
supported by the Office of Science of the U.S. Department of Energy
under Contract No. DE-AC02-05CH11231 using NERSC award
ßASCR-ERCAP0022715.
\end{ack}

% \section*{References}
\bibliographystyle{plain}
\bibliography{neurips_2022.bib}

\begin{thebibliography}{10}

\bibitem{Alet2018ModularM}
Ferran Alet, Tomas Lozano-Perez, and L.~Kaelbling.
\newblock Modular meta-learning.
\newblock {\em ArXiv}, abs/1806.10166, 2018.

\bibitem{ando2005framework}
Rie~Kubota Ando, Tong Zhang, and Peter Bartlett.
\newblock A framework for learning predictive structures from multiple tasks
  and unlabeled data.
\newblock {\em Journal of Machine Learning Research}, 6(11), 2005.

\bibitem{Ayed2019LearningDS}
I.~Ayed, Emmanuel de~B{\'e}zenac, A.~Pajot, J.~Brajard, and P.~Gallinari.
\newblock Learning dynamical systems from partial observations.
\newblock {\em ArXiv}, abs/1902.11136, 2019.

\bibitem{ayed2019learning}
Ibrahim Ayed, Emmanuel~De Bézenac, Arthur Pajot, and Patrick Gallinari.
\newblock Learning partially observed {PDE} dynamics with neural networks,
  2019.

\bibitem{Azencot2020ForecastingSD}
Omri Azencot, N.~Erichson, V.~Lin, and Michael~W. Mahoney.
\newblock Forecasting sequential data using consistent koopman autoencoders.
\newblock In {\em International Conference on Machine Learning}, 2020.

\bibitem{azencot2020forecasting}
Omri Azencot, N~Benjamin Erichson, Vanessa Lin, and Michael Mahoney.
\newblock Forecasting sequential data using consistent koopman autoencoders.
\newblock In {\em International Conference on Machine Learning}, pages
  475--485. PMLR, 2020.

\bibitem{baxter1998theoretical}
Jonathan Baxter.
\newblock Theoretical models of learning to learn.
\newblock In {\em Learning to learn}, pages 71--94. Springer, 1998.

\bibitem{chen2018neural}
Ricky~TQ Chen, Yulia Rubanova, Jesse Bettencourt, and David Duvenaud.
\newblock Neural ordinary differential equations.
\newblock In {\em Proceedings of the 32nd International Conference on Neural
  Information Processing Systems}, pages 6572--6583, 2018.

\bibitem{Cranmer2020LagrangianNN}
M.~Cranmer, S.~Greydanus, S.~Hoyer, P.~Battaglia, D.~Spergel, and S.~Ho.
\newblock Lagrangian neural networks.
\newblock {\em ArXiv}, abs/2003.04630, 2020.

\bibitem{daw2021pid}
Arka Daw, M~Maruf, and Anuj Karpatne.
\newblock Pid-gan: A gan framework based on a physics-informed discriminator
  for uncertainty quantification with physics.
\newblock In {\em Proceedings of the 27th ACM SIGKDD Conference on Knowledge
  Discovery \& Data Mining}, pages 237--247, 2021.

\bibitem{PDE-CDNN}
Emmanuel de~Bezenac, Arthur Pajot, and Patrick Gallinari.
\newblock Deep learning for physical processes: Incorporating prior scientific
  knowledge.
\newblock In {\em International Conference on Learning Representations}, 2018.

\bibitem{dona2021pdedriven}
J{\'e}r{\'e}mie Don{\`a}, Jean-Yves Franceschi, sylvain lamprier, and patrick
  gallinari.
\newblock {\{}PDE{\}}-driven spatiotemporal disentanglement.
\newblock In {\em International Conference on Learning Representations}, 2021.

\bibitem{Finn2017ModelAgnosticMF}
Chelsea Finn, P.~Abbeel, and S.~Levine.
\newblock Model-agnostic meta-learning for fast adaptation of deep networks.
\newblock In {\em International Conference of Machine Learning}, 2017.

\bibitem{finn2016unsupervised}
Chelsea Finn, Ian Goodfellow, and Sergey Leine.
\newblock Unsupervised learning for physical interaction through video
  prediction.
\newblock In {\em Advances in neural information processing systems}, pages
  64--72, 2016.

\bibitem{gatys2016image}
Leon~A Gatys, Alexander~S Ecker, and Matthias Bethge.
\newblock Image style transfer using convolutional neural networks.
\newblock In {\em Proceedings of the IEEE conference on computer vision and
  pattern recognition}, pages 2414--2423, 2016.

\bibitem{Grant2018RecastingGM}
E.~Grant, Chelsea Finn, S.~Levine, Trevor Darrell, and T.~Griffiths.
\newblock Recasting gradient-based meta-learning as hierarchical bayes.
\newblock {\em ArXiv Preprint}, abs/1801.08930, 2018.

\bibitem{Greydanus2019HamiltonianNN}
S.~Greydanus, Misko Dzamba, and J.~Yosinski.
\newblock Hamiltonian neural networks.
\newblock {\em ArXiv}, abs/1906.01563, 2019.

\bibitem{gupta2018meta}
Abhishek Gupta, Russell Mendonca, YuXuan Liu, Pieter Abbeel, and Sergey Levine.
\newblock Meta-reinforcement learning of structured exploration strategies.
\newblock In {\em Proceedings of the 32nd International Conference on Neural
  Information Processing Systems}, pages 5307--5316, 2018.

\bibitem{he2016deep}
Kaiming He, Xiangyu Zhang, Shaoqing Ren, and Jian Sun.
\newblock Deep residual learning for image recognition.
\newblock In {\em Proceedings of the IEEE conference on computer vision and
  pattern recognition}, pages 770--778, 2016.

\bibitem{hua2021hmrl}
Yun Hua, Xiangfeng Wang, Bo~Jin, Wenhao Li, Junchi Yan, Xiaofeng He, and
  Hongyuan Zha.
\newblock Hmrl: Hyper-meta learning for sparse reward reinforcement learning
  problem.
\newblock In {\em Proceedings of the 27th ACM SIGKDD Conference on Knowledge
  Discovery \& Data Mining}, pages 637--645, 2021.

\bibitem{huang2017arbitrary}
Xun Huang and Serge Belongie.
\newblock Arbitrary style transfer in real-time with adaptive instance
  normalization.
\newblock In {\em Proceedings of the IEEE International Conference on Computer
  Vision}, pages 1501--1510, 2017.

\bibitem{huang2021coupled}
Zijie Huang, Yizhou Sun, and Wei Wang.
\newblock Coupled graph ode for learning interacting system dynamics.
\newblock In {\em 27th ACM SIGKDD Conference on Knowledge Discovery and Data
  Mining, KDD 2021}, pages 705--715, 2021.

\bibitem{hwang2019improving}
Jessica Hwang, Paulo Orenstein, Judah Cohen, Karl Pfeiffer, and Lester Mackey.
\newblock Improving subseasonal forecasting in the western us with machine
  learning.
\newblock In {\em Proceedings of the 25th ACM SIGKDD International Conference
  on Knowledge Discovery \& Data Mining}, pages 2325--2335, 2019.

\bibitem{Butcher1996numerical}
J.C.Butcher.
\newblock {\em Applied Numerical Mathematics}, volume~20.
\newblock Elsevier B.V., 1996.

\bibitem{jing2019neural}
Yongcheng Jing, Yezhou Yang, Zunlei Feng, Jingwen Ye, Yizhou Yu, and Mingli
  Song.
\newblock Neural style transfer: A review.
\newblock {\em IEEE transactions on visualization and computer graphics},
  26(11):3365--3385, 2019.

\bibitem{karras2019style}
Tero Karras, Samuli Laine, and Timo Aila.
\newblock A style-based generator architecture for generative adversarial
  networks.
\newblock In {\em Proceedings of the IEEE/CVF Conference on Computer Vision and
  Pattern Recognition}, pages 4401--4410, 2019.

\bibitem{Kendall2018MultitaskLU}
Alex Kendall, Yarin Gal, and Roberto Cipolla.
\newblock Multi-task learning using uncertainty to weigh losses for scene
  geometry and semantics.
\newblock {\em 2018 IEEE/CVF Conference on Computer Vision and Pattern
  Recognition}, pages 7482--7491, 2018.

\bibitem{kim2019deep}
Junhyuk Kim and Changhoon Lee.
\newblock Deep unsupervised learning of turbulence for inflow generation at
  various reynolds numbers.
\newblock {\em arXiv:1908.10515}, 2019.

\bibitem{manek2020learning}
J.~Zico Kolter and Gaurav Manek.
\newblock Learning stable deep dynamics models.
\newblock In {\em Advances in Neural Information Processing Systems (NeurIPS)},
  volume~32, pages 11128--11136, 2019.

\bibitem{kunes2012dimensionless}
Josef Kunes.
\newblock {\em Dimensionless physical quantities in science and engineering}.
\newblock Elsevier, 2012.

\bibitem{kyprianidis2012state}
Jan~Eric Kyprianidis, John Collomosse, Tinghuai Wang, and Tobias Isenberg.
\newblock State of the" art”: A taxonomy of artistic stylization techniques
  for images and video.
\newblock {\em IEEE transactions on visualization and computer graphics},
  19(5):866--885, 2012.

\bibitem{leguen20phydnet}
Vincent Le~Guen and Nicolas Thome.
\newblock Disentangling physical dynamics from unknown factors for unsupervised
  video prediction.
\newblock In {\em Computer Vision and Pattern Recognition (CVPR)}. 2020.

\bibitem{li2020fourier}
Zongyi Li, Nikola Kovachki, Kamyar Azizzadenesheli, Burigede Liu, Kaushik
  Bhattacharya, Andrew Stuart, and Anima Anandkumar.
\newblock Fourier neural operator for parametric partial differential
  equations.
\newblock {\em International Conference on Learning Representations}, 2021.

\bibitem{lin2020preserving}
Haoxing Lin, Rufan Bai, Weijia Jia, Xinyu Yang, and Yongjian You.
\newblock Preserving dynamic attention for long-term spatial-temporal
  prediction.
\newblock In {\em Proceedings of the 26th ACM SIGKDD International Conference
  on Knowledge Discovery \& Data Mining}, pages 36--46, 2020.

\bibitem{Liu2019EndToEndML}
Shikun Liu, Edward Johns, and Andrew~J. Davison.
\newblock End-to-end multi-task learning with attention.
\newblock {\em 2019 IEEE/CVF Conference on Computer Vision and Pattern
  Recognition (CVPR)}, pages 1871--1880, 2019.

\bibitem{Liu2021SemisupervisedMW}
Xiao Liu, Spyridon Thermos, Alison~Q. O'Neil, and Sotirios~A. Tsaftaris.
\newblock Semi-supervised meta-learning with disentanglement for
  domain-generalised medical image segmentation.
\newblock In {\em MICCAI}, 2021.

\bibitem{locatello2019challenging}
Francesco Locatello, Stefan Bauer, Mario Lucic, Gunnar Raetsch, Sylvain Gelly,
  Bernhard Sch{\"o}lkopf, and Olivier Bachem.
\newblock Challenging common assumptions in the unsupervised learning of
  disentangled representations.
\newblock In {\em international conference on machine learning}, pages
  4114--4124. PMLR, 2019.

\bibitem{madec2015nemo}
Gurvan Madec et~al.
\newblock {NEMO} ocean engine, 2015.
\newblock Technical Note. Institut Pierre-Simon Laplace (IPSL), France.
  \url{https://epic.awi.de/id/eprint/39698/1/NEMO_book_v6039.pdf}.

\bibitem{massague2020learning}
Armand~Comas Massague, Chi Zhang, Zlatan Feric, Octavia Camps, and Rose Yu.
\newblock Learning disentangled representations of video with missing data.
\newblock {\em arXiv preprint arXiv:2006.13391}, 2020.

\bibitem{Massague2020LearningDR}
Armand~Comas Massague, Chi Zhang, Zlatan Feric, Octavia~I. Camps, and Rose Yu.
\newblock Learning disentangled representations of video with missing data.
\newblock {\em arXiv preprint arXiv: 2006.13391}.

\bibitem{mathieu2015deep}
Michael Mathieu, Camille Couprie, and Yann LeCun.
\newblock Deep multi-scale video prediction beyond mean square error.
\newblock {\em arXiv preprint arXiv:1511.05440}, 2015.

\bibitem{matsubara2019dynamic}
Yasuko Matsubara and Yasushi Sakurai.
\newblock Dynamic modeling and forecasting of time-evolving data streams.
\newblock In {\em Proceedings of the 25th ACM SIGKDD International Conference
  on Knowledge Discovery \& Data Mining}, pages 458--468, 2019.

\bibitem{mazier1}
George E~Karniadakis Maziar~Raissi, Paris~Perdikaris.
\newblock Physics-informed neural networks: A deep learning framework for
  solving forward and inverse problems involving nonlinear partial differential
  equations.
\newblock {\em Journal of Computational Physics}, 378:686--707, 2019.

\bibitem{mohri2018foundations}
Mehryar Mohri, Afshin Rostamizadeh, and Ameet Talwalkar.
\newblock {\em Foundations of machine learning}.
\newblock MIT press, 2018.

\bibitem{Morton2018Deep}
Jeremy Morton, Antony Jameson, Mykel J.~Kochenderfer, and Freddie Witherden.
\newblock Deep dynamical modeling and control of unsteady fluid flows.
\newblock In {\em Advances in Neural Information Processing Systems (NeurIPS)},
  2018.

\bibitem{Munkhdalai2017MetaN}
Tsendsuren Munkhdalai and Hong Yu.
\newblock Meta networks.
\newblock {\em Proceedings of machine learning research}, 70:2554--2563, 2017.

\bibitem{Nie2020SemiSupervisedSF}
Weili Nie, Tero Karras, Animesh Garg, Shoubhik Debhath, A.~Patney, Ankit~B.
  Patel, and Anima Anandkumar.
\newblock Semi-supervised stylegan for disentanglement learning.
\newblock In {\em International Conference on Machine Learning}, 2020.

\bibitem{oh2015action}
Junhyuk Oh, Xiaoxiao Guo, Honglak Lee, Richard Lewis, and Satinder Singh.
\newblock Action-conditional video prediction using deep networks in atari
  games.
\newblock In {\em Proceedings of the 28th International Conference on Neural
  Information Processing Systems-Volume 2}, pages 2863--2871, 2015.

\bibitem{Oprea2020ARO}
Sergiu Oprea, P.~Martinez-Gonzalez, A.~Garcia-Garcia, John~Alejandro
  Castro-Vargas, S.~Orts-Escolano, J.~Garcia-Rodriguez, and Antonis~A. Argyros.
\newblock A review on deep learning techniques for video prediction.
\newblock {\em ArXiv}, abs/2004.05214, 2020.

\bibitem{oreshkin2019n}
Boris~N Oreshkin, Dmitri Carpov, Nicolas Chapados, and Yoshua Bengio.
\newblock N-beats: Neural basis expansion analysis for interpretable time
  series forecasting.
\newblock In {\em International Conference on Learning Representations}, 2019.

\bibitem{pfaff2021learning}
Tobias Pfaff, Meire Fortunato, Alvaro Sanchez-Gonzalez, and Peter Battaglia.
\newblock Learning mesh-based simulation with graph networks.
\newblock In {\em International Conference on Learning Representations}, 2021.

\bibitem{prabhumoye2018style}
Shrimai Prabhumoye, Yulia Tsvetkov, Ruslan Salakhutdinov, and Alan~W Black.
\newblock Style transfer through back-translation.
\newblock In {\em Proceedings of the 56th Annual Meeting of the Association for
  Computational Linguistics (Volume 1: Long Papers)}, pages 866--876, 2018.

\bibitem{raissi2017physics}
Maziar Raissi, Paris Perdikaris, and George~Em Karniadakis.
\newblock Physics informed deep learning (part i): Data-driven solutions of
  nonlinear partial differential equations.
\newblock {\em arXiv preprint arXiv:1711.10561}, 2017.

\bibitem{redko2017theoretical}
Ievgen Redko, Amaury Habrard, and Marc Sebban.
\newblock Theoretical analysis of domain adaptation with optimal transport.
\newblock In {\em Joint European Conference on Machine Learning and Knowledge
  Discovery in Databases}, pages 737--753. Springer, 2017.

\bibitem{unet}
Olaf Ronneberger, Philipp Fischer, and Thomas Brox.
\newblock U-net: Convolutional networks for biomedical image segmentation.
\newblock In {\em International Conference on Medical image computing and
  computer-assisted intervention}, pages 234--241. Springer, 2015.

\bibitem{ruder2016artistic}
Manuel Ruder, Alexey Dosovitskiy, and Thomas Brox.
\newblock Artistic style transfer for videos.
\newblock In {\em German conference on pattern recognition}, pages 26--36.
  Springer, 2016.

\bibitem{rusu2018metalearning}
Andrei~A. Rusu, Dushyant Rao, Jakub Sygnowski, Oriol Vinyals, Razvan Pascanu,
  Simon Osindero, and Raia Hadsell.
\newblock Meta-learning with latent embedding optimization.
\newblock In {\em International Conference on Learning Representations}, 2019.

\bibitem{sato2018example}
Syuhei Sato, Yoshinori Dobashi, Theodore Kim, and Tomoyuki Nishita.
\newblock Example-based turbulence style transfer.
\newblock {\em ACM Transactions on Graphics (TOG)}, 37(4):1--9, 2018.

\bibitem{Seo2020PhysicsawareSM}
Sungyong Seo, Chuizheng Meng, Sirisha Rambhatla, and Y.~Liu.
\newblock Physics-aware spatiotemporal modules with auxiliary tasks for
  meta-learning.
\newblock {\em arXiv preprint arXiv: 2006.08831}, 2020.

\bibitem{Shi2017DeepLF}
Xingjian Shi, Zhihan Gao, Leonard Lausen, Hao Wang, D.~Yeung, W.~Wong, and Wang
  chun Woo.
\newblock Deep learning for precipitation nowcasting: A benchmark and a new
  model.
\newblock In {\em Advances in neural information processing systems}, 2017.

\bibitem{Snell2017PrototypicalNF}
J.~Snell, Kevin Swersky, and R.~Zemel.
\newblock Prototypical networks for few-shot learning.
\newblock In {\em Advances in Neural Information Processing Systems}, 2017.

\bibitem{suo2020tadanet}
Qiuling Suo, Jingyuan Chou, Weida Zhong, and Aidong Zhang.
\newblock Tadanet: Task-adaptive network for graph-enriched meta-learning.
\newblock In {\em Proceedings of the 26th ACM SIGKDD International Conference
  on Knowledge Discovery \& Data Mining}, pages 1789--1799, 2020.

\bibitem{thrun1998learning}
Sebastian Thrun and Lorien Pratt.
\newblock Learning to learn: Introduction and overview.
\newblock In {\em Learning to learn}, pages 3--17. Springer, 1998.

\bibitem{eulerian}
Jonathan Tompson, Kristofer Schlachter, Pablo Sprechmann, and Ken Perlin.
\newblock Accelerating {E}ulerian fluid simulation with convolutional networks.
\newblock In {\em ICML'17 Proceedings of the 34th International Conference on
  Machine Learning}, volume~70, pages 3424--3433, 2017.

\bibitem{Vandenhende2021MultiTaskLF}
Simon Vandenhende, Stamatios Georgoulis, Wouter~Van Gansbeke, Marc Proesmans,
  Dengxin Dai, and Luc~Van Gool.
\newblock Multi-task learning for dense prediction tasks: A survey.
\newblock {\em IEEE transactions on pattern analysis and machine intelligence},
  PP, 2021.

\bibitem{villegas2017decomposing}
Ruben Villegas, Jimei Yang, Seunghoon Hong, Xunyu Lin, and Honglak Lee.
\newblock Decomposing motion and content for natural video sequence prediction.
\newblock In {\em International Conference on Learning Representations (ICLR)},
  2017.

\bibitem{Vinyals2016MatchingNF}
Oriol Vinyals, Charles Blundell, T.~Lillicrap, K.~Kavukcuoglu, and Daan
  Wierstra.
\newblock Matching networks for one shot learning.
\newblock In {\em NIPS}, 2016.

\bibitem{wang2020towards}
Rui Wang, Karthik Kashinath, Mustafa Mustafa, Adrian Albert, and Rose Yu.
\newblock Towards physics-informed deep learning for turbulent flow prediction.
\newblock In {\em Proceedings of the 26th ACM SIGKDD International Conference
  on Knowledge Discovery \& Data Mining}, pages 1457--1466, 2020.

\bibitem{Wang2020symmetry}
Rui Wang, Robin Walters, and Rose Yu.
\newblock Incorporating symmetry into deep dynamics models for improved
  generalization.
\newblock {\em arXiv preprint arXiv:2002.03061}, 2020.

\bibitem{wang2017predrnn}
Yunbo Wang, Mingsheng Long, Jianmin Wang, Zhifeng Gao, and Philip~S Yu.
\newblock {PredRNN}: Recurrent neural networks for predictive learning using
  spatiotemporal {LSTM}s.
\newblock In {\em Advances in Neural Information Processing Systems}, pages
  879--888, 2017.

\bibitem{wang2021predrnn}
Yunbo Wang, Haixu Wu, Jianjin Zhang, Zhifeng Gao, Jianmin Wang, Philip~S Yu,
  and Mingsheng Long.
\newblock {PredRNN}: A recurrent neural network for spatiotemporal predictive
  learning, 2021.

\bibitem{wilson2020multi}
Garrett Wilson, Janardhan~Rao Doppa, and Diane~J Cook.
\newblock Multi-source deep domain adaptation with weak supervision for
  time-series sensor data.
\newblock In {\em Proceedings of the 26th ACM SIGKDD International Conference
  on Knowledge Discovery \& Data Mining}, pages 1768--1778, 2020.

\bibitem{xie2018tempogan}
You Xie, Erik Franz, Mengyu Chu, and Nils Thuerey.
\newblock tempo{GAN}: A temporally coherent, volumetric {GAN} for
  super-resolution fluid flow.
\newblock {\em ACM Transactions on Graphics (TOG)}, 37(4):95, 2018.

\bibitem{Xu2018PADNetMG}
Dan Xu, Wanli Ouyang, Xiaogang Wang, and N.~Sebe.
\newblock Pad-net: Multi-tasks guided prediction-and-distillation network for
  simultaneous depth estimation and scene parsing.
\newblock {\em 2018 IEEE/CVF Conference on Computer Vision and Pattern
  Recognition}, pages 675--684, 2018.

\bibitem{xue2016visual}
Tianfan Xue, Jiajun Wu, Katherine Bouman, and Bill Freeman.
\newblock Visual dynamics: Probabilistic future frame synthesis via cross
  convolutional networks.
\newblock In {\em Advances in neural information processing systems (NeurIPS)},
  pages 91--99, 2016.

\bibitem{yao2019hierarchically}
Huaxiu Yao, Ying Wei, Junzhou Huang, and Zhenhui Li.
\newblock Hierarchically structured meta-learning.
\newblock In {\em International Conference on Machine Learning}, pages
  7045--7054. PMLR, 2019.

\bibitem{yoon2018bayesian}
Jaesik Yoon, Taesup Kim, Ousmane Dia, Sungwoong Kim, Yoshua Bengio, and Sungjin
  Ahn.
\newblock Bayesian model-agnostic meta-learning.
\newblock In {\em Proceedings of the 32nd International Conference on Neural
  Information Processing Systems}, pages 7343--7353, 2018.

\bibitem{zhou2021metalearning}
Allan Zhou, Tom Knowles, and Chelsea Finn.
\newblock Meta-learning symmetries by reparameterization.
\newblock In {\em International Conference on Learning Representations}, 2021.

\end{thebibliography}

\clearpage
\newpage
\appendix
\section{Additional Results}\label{app:results}
\subsection{Experiments Details}\label{app:exp_details}
We use a sliding window approach to generate samples of sequences. For test-future, we train and test on the same tasks but different time steps. For test-domain,  we train and test on different tasks with an 80\%-20\% split.  All models are trained to make next step prediction given the previous steps as input. We forecast in an autoregressive manner to generate multi-step ahead predictions.  All results are averaged over 3 runs with random initialization. 

We tuned learning rate (1e-2$\sim$1e-5), batch size (16$\sim$64), the number of unrolled prediction steps for backpropogation (2$\sim$6), and hidden size (64$\sim$512). We fixed the number of historic input frames as 20. The models that use ResNet, including \texttt{MAML}, \texttt{MetaNets} and \texttt{DyAd+ResNet}, have same number of residual blocks for fair comparison. When we trained the encoder on turbulence and sea surface temperature, we used $\alpha = 1$ and $\beta = 1$ in the Equ. \ref{eqn:encoder}. For ocean currents, we used $\alpha = 0.2$ and $\beta = 0.2$. We performed all experiments on 4 V100 GPUs. Table \ref{tab:params} in Appendix \ref{app:results} displays the number of parameters of each fine-tuned model on the turbulence dataset. 

We compare our model with a series of baselines on the multi-step forecasting with different dynamics. We consider two testing scenarios: (1)  dynamics with different initial conditions   (test-future) and (2) dynamics with different parameters such as external force (test-domain). The first scenario evaluates the models' ability to extrapolate into the future for the same task. The second scenario is to estimate the capability of the models to generalize across different tasks.

Apart from the root mean square error (RMSE), we also report the energy spectrum error (ESE) for ocean current prediction which quantifies the physical consistency. 
ESE indicates whether the predictions preserve the correct statistical distribution and obey the energy conservation law, which is a critical metric for physical consistency. 
The turbulence kinetic energy spectrum $E(k)$ is related to the mean turbulence kinetic energy as 
\begin{align*}
\int_{0}^{\infty}E(k)dk = (\overline{(u^{'})^2} + \overline{(v^{'})^2})/2, \;\;  \overline{(u^{'})^2} = \frac{1}{T}\sum_{t=0}^T(u(t) - \bar{u})^2.
\end{align*}

where the $k$ is the wavenumber and $t$ is the time step. 
The spectrum can describe the transfer of energy from large scales of motion to the small scales and provides a representation of the dependence of energy on frequency.
Thus, the ESE can indicate whether the predictions preserve the correct statistical distribution and obey the energy conservation law.  
A trivial example that can illustrate why we need ESE is that if a model simply outputs moving averages of input frames, the accumulated RMSE of predictions might not be high but the ESE would be really big because all the small or even medium eddies are smoothed out.

\subsection{Addtional Experiments Results}
\begin{figure*}[htb!]
	\centering
	\includegraphics[width=\textwidth]{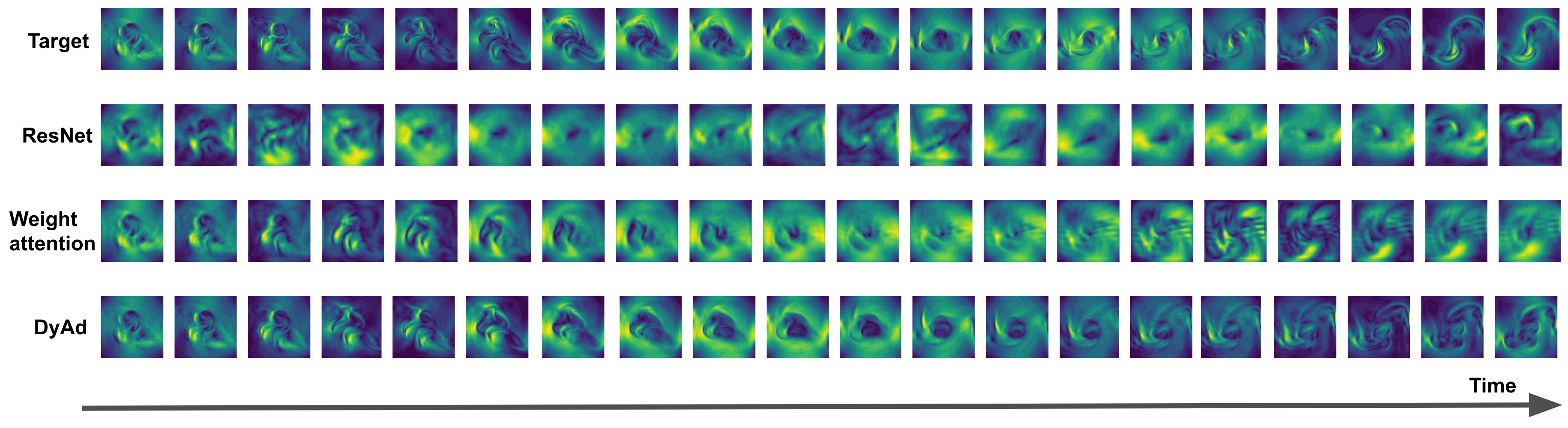}
	\includegraphics[width=\textwidth]{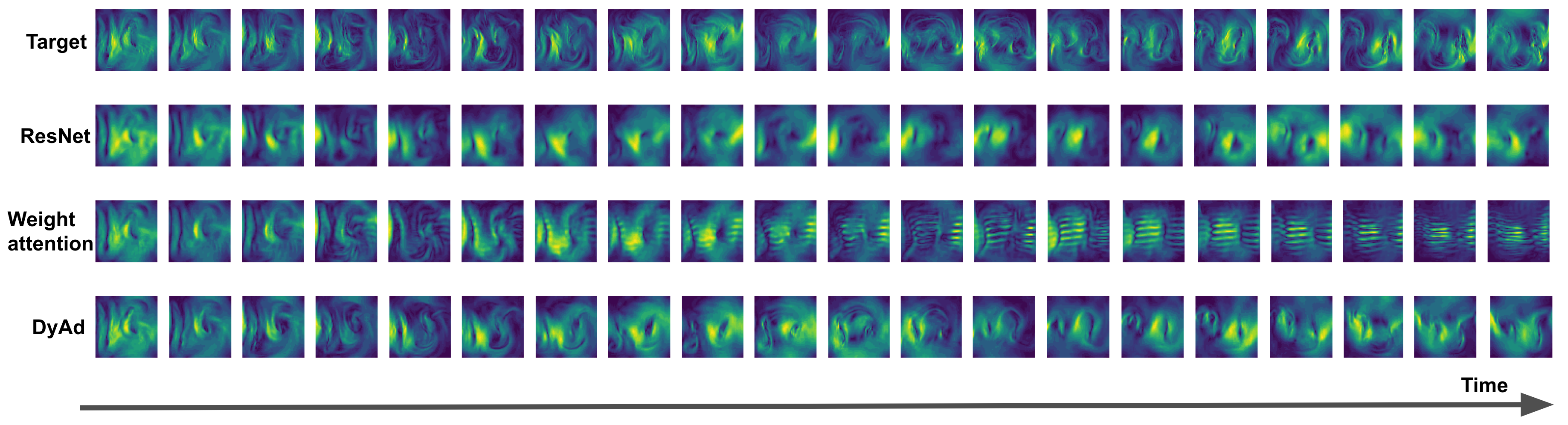}
	\caption{Target and full 20-step predictions by \texttt{Unet-c}, \texttt{Modular-wt} and \ours{} turbulent flows with buoyancy factors 9 (top) and 21 (bottom) respectively.}
	\label{vis:full_turbulence}
	\vspace{-3mm}
\end{figure*}

\begin{table*}[htb!]
    \centering
    \begin{tabular}{P{1.8cm}P{1.8cm}P{1.8cm}P{1.2cm}P{1.2cm}P{1cm}P{1.1cm}P{1cm}}\toprule
    \texttt{ResNet} & \texttt{U-net} & \texttt{PredRNN} & \texttt{VarSepNet} & \texttt{Mod-attn} & \texttt{Mod-wt} & \texttt{MetaNets} & \texttt{MAML}   \\
    \midrule
    20.32 & 9.69 & 27.83 & 9.85 & 13.19 & 13.19 & 9.63 & 20.32  \\
    \midrule
    \texttt{DyAd+ResNet} & \texttt{DyAd+Unet} & \texttt{DyAd+Unet-Big} &&&&& \\
    15.60 & 12.20 & 23.64 & & & & & \\
    \bottomrule
    \end{tabular}
    \caption{The number of parameters of the fine-tuned model of each architecture on the turbulent flow dataset}
    \label{tab:params}
\end{table*}

\begin{figure}[htb!]
	\centering
	\includegraphics[width=0.4\textwidth]{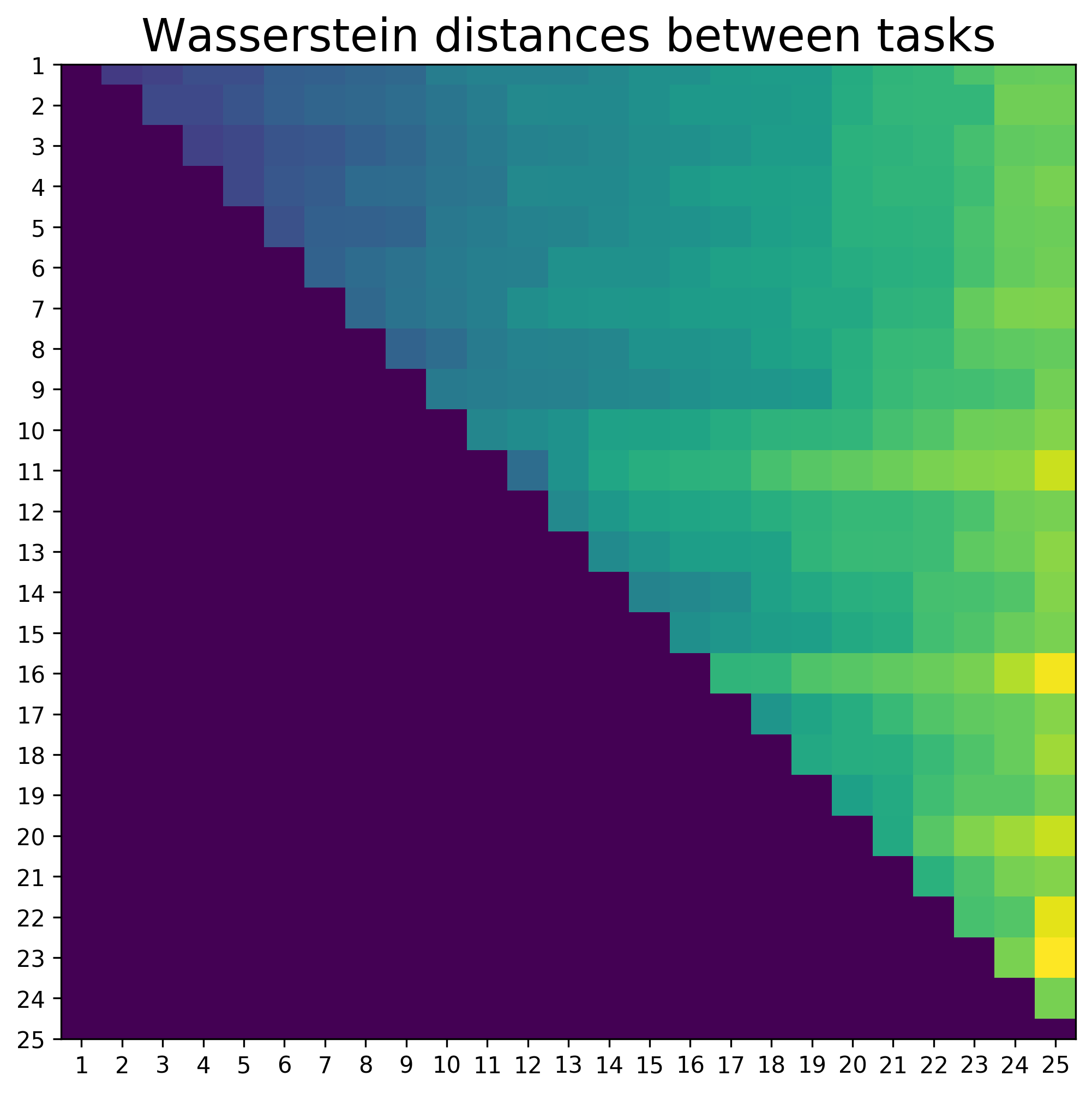}
	\includegraphics[width=0.45\textwidth]{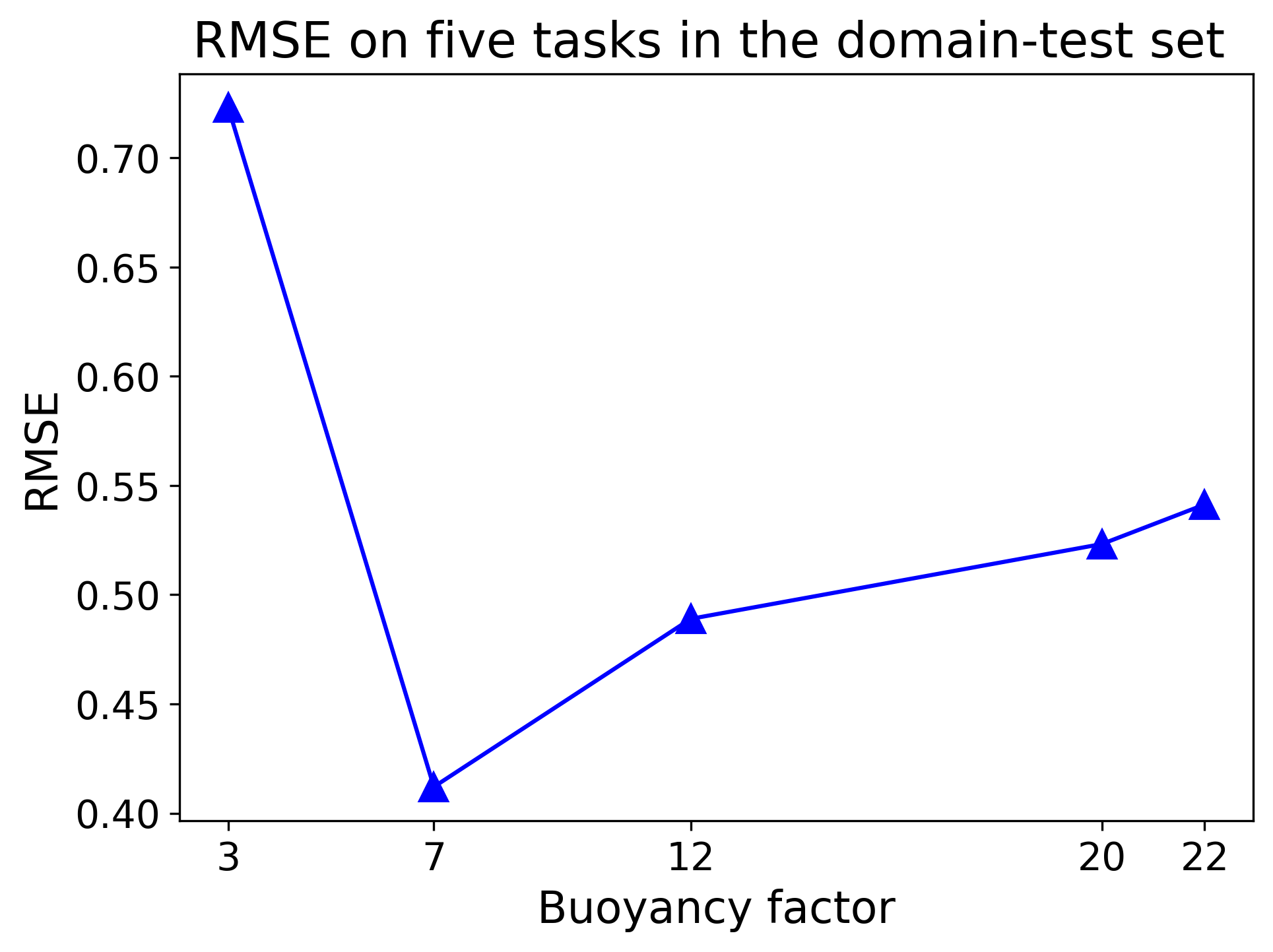}
	\caption{Left: Pairwise RMSEs between the averaged samples of different tasks in the turbulent flow dataset. RMSE between the averaged samples is a lower bound of Wasserstein distance between tasks. Right: DyAd+ResNet prediction RMSE breakdown on five tasks in the domain test set.}
	\label{vis:w_dis}
\end{figure}

\section{Theoretical Analysis}\label{app:theory}
The high-level idea of our method is to learn a good representation of the  dynamics that generalizes well across a heterogeneous domain, and then adapt this representation to make predictions on new tasks.  Our model achieves this by learning on multiple tasks simultaneously and then adapting to new tasks with domain transfer. We prove that learning the tasks simultaneously as opposed to independently results in tighter generalization bounds.  As these are upper bounds, they do not necessarily imply better generalization, although empirically, this is what we observe. Although we are not able to provide a proof of our empirical finding that an encoder-forecaster architecture outperforms a monolithic architecture, we do provide an analysis of the error, decomposing it into error due to the forecaster, encoder, and domain adaptation.  This decomposition helps to quantify the trade-offs involved in our two-part architecture.  

Suppose we have $K$ tasks $\{c_k\}_{k=1}^K \sim \mathcal{C}$, each of which is sampled from a continuous parameter space $\cC$. Here $c_k$ are the parameters of the task, which can be inferred by the encoder. For each task $c_k$, we have a collection of data $\hat{\mu}_{c_k}$ of size $n$, sampled from $\mu_{k}$, a shorthand for $\mu_{c_k}$. 

\subsection{Multi-Task Learning Error}
We want to bound the true loss $\epsilon$ using the empirical loss $\hat{\epsilon}$ and Rademacher complexity of the hypothesis class $\T{F}$.  We can use the classic results from \cite{ando2005framework}. Define empirical Rademacher complexity for all tasks as 
\begin{align}
\hat{R}_{\V{X}}(\T{F}) = \mathbb{E}_{\sigma} \! \left[ \sup_{f\in \T{F}}\left( \frac{1}{nK} \sum_{k=1}^K\sum_{i=1}^n \sigma_k^{(i)} l(f(\V{x}_k^{(i)})) \right) \right]
    \label{eqn:rademacher_join}
\end{align}
where $\{\sigma_k^{(i)}\}$ are independent binary variables $\sigma_k^{(i)} \in \{-1, 1\}$.  The true Rademacher complexity is then defined $R(\T{F}) = \bbE_{\V{X}} ( \hat{R}_{\V{X}}(\T{F}))$.  

The following theorem restates the main result from \cite{ando2005framework}. We simplify the statement by using Rademacher complexity rather than the set cover number argument used in the original proof.  The assumption of bounded loss is reasonable given a prescribed data domain and data preprocessing. 

\begin{theorem}\label{thm:main-jointbound}(\cite{ando2005framework}) 
Assume the loss is bounded $l \leq 1/2$. Given $n$ samples each from $K$ different forecasting tasks $\mu_{1}, \cdots, \mu_{k}$, then with probability at least $1-\delta$, the following inequality holds for each $f \in \T{F}$ in the hypothesis class:
\begin{align*}
\epsilon(f) 
& \leq 
\hat{\epsilon}(f)
+ 2R(\T{F}) + \sqrt{\log{(1/\delta)}/(2 nK)}
\end{align*}
\end{theorem}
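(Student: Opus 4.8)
Since the claim is a one-sided uniform-convergence statement, the plan is to follow the standard Rademacher-complexity route rather than the covering-number argument of \cite{ando2005framework}: first concentrate the uniform-deviation functional with a bounded-differences inequality, then dominate its expectation by the multi-task Rademacher complexity $R(\T{F})$ via symmetrization. Throughout, write $\hat{\epsilon}(f) = \frac{1}{nK}\sum_{k=1}^{K}\sum_{i=1}^{n} l(f(\V{x}_k^{(i)}))$ and $\epsilon(f) = \frac{1}{K}\sum_{k=1}^{K}\bbE_{\mu_k}[l(f)]$, so $\bbE_{\V{X}}[\hat{\epsilon}(f)] = \epsilon(f)$, and set $\Phi(\V{X}) = \sup_{f \in \T{F}}\bigl(\epsilon(f) - \hat{\epsilon}(f)\bigr)$. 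It suffices to show $\Phi(\V{X}) \le 2R(\T{F}) + \sqrt{\log(1/\delta)/(2nK)}$ with probability at least $1-\delta$, because then $\epsilon(f) - \hat{\epsilon}(f) \le \Phi(\V{X})$ for every $f$ simultaneously.

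\textbf{Step 1 (concentration).} First I would observe that the $nK$ points $\{\V{x}_k^{(i)}\}$ are mutually independent --- independent within and across tasks, though not identically distributed --- so McDiarmid's bounded-differences inequality applies. Each point enters $\hat{\epsilon}$ with weight $1/(nK)$ and $l$ has range at most $1$ (this is the only use of the boundedness hypothesis $l \le 1/2$), so changing a single point perturbs $\hat{\epsilon}$, hence $\Phi$, by at most $1/(nK)$; McDiarmid then gives $\Phi(\V{X}) \le \bbE_{\V{X}}[\Phi(\V{X})] + \sqrt{\log(1/\delta)/(2nK)}$ with probability at least $1-\delta$.

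\textbf{Step 2 (symmetrization).} Next I would bound $\bbE_{\V{X}}[\Phi(\V{X})]$. Introduce an independent ghost sample $\{\tilde{\V{x}}_k^{(i)}\}$ with the same per-task distributions and denote its pooled empirical risk $\tilde{\epsilon}(f)$; since $\epsilon(f) = \bbE[\tilde{\epsilon}(f)]$, Jensen's inequality gives $\bbE_{\V{X}}[\Phi(\V{X})] \le \bbE\bigl[\sup_f \bigl(\tilde{\epsilon}(f) - \hat{\epsilon}(f)\bigr)\bigr]$. Writing the difference as $\frac{1}{nK}\sum_{k,i}\bigl(l(f(\tilde{\V{x}}_k^{(i)})) - l(f(\V{x}_k^{(i)}))\bigr)$, exchangeability of each pair $(\V{x}_k^{(i)}, \tilde{\V{x}}_k^{(i)})$ lets me insert Rademacher signs $\sigma_k^{(i)} \in \{\pm 1\}$ in front of each term without changing the law, and splitting the supremum of the sum into two then yields $\bbE_{\V{X}}[\Phi(\V{X})] \le 2\,\bbE_{\sigma,\V{X}}\bigl[\sup_f \frac{1}{nK}\sum_{k,i}\sigma_k^{(i)} l(f(\V{x}_k^{(i)}))\bigr] = 2R(\T{F})$, which is exactly twice the quantity in \eqref{eqn:rademacher_join} after taking the outer expectation over $\V{X}$. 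Combining with Step 1 completes the proof.

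\textbf{Main obstacle.} There is no genuinely hard step --- this is a textbook argument --- but the point needing care is the multi-task, non-i.i.d.\ structure: I must check that McDiarmid needs only independence (not identical distribution) and that the symmetrization is carried out block-respectingly, each $\sigma_k^{(i)}$ swapping $\V{x}_k^{(i)}$ only with its own-task ghost $\tilde{\V{x}}_k^{(i)}$, so that the product-of-per-task-distributions law is preserved throughout. The quantitatively important feature --- not a difficulty --- is that pooling all $nK$ examples under the single shared class $\T{F}$ produces a $1/(nK)$-rate deviation term and a joint Rademacher complexity, whereas $K$ independently trained predictors would give only a $1/n$ rate; making that comparison precise is the content of Proposition \ref{prop:jointbound}.
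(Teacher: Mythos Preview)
Your proposal is correct and follows essentially the same route as the paper: apply McDiarmid to the uniform-deviation functional $\Phi(\V{X})=\sup_{f}(\epsilon(f)-\hat\epsilon(f))$ to get the $\sqrt{\log(1/\delta)/(2nK)}$ tail, then bound $\bbE[\Phi]$ by $2R(\T F)$ via symmetrization. In fact your write-up is more explicit than the paper's --- the paper simply cites \cite{mohri2018foundations}, Theorem~3.3, for the symmetrization step and phrases the bounded-differences constant somewhat loosely (``$|Q|\le 1/2$ so we can take $c_i=1$''), whereas you correctly identify $c_i=1/(nK)$ from the per-coordinate sensitivity of $\hat\epsilon$; your remark that the block-respecting swap $(\V{x}_k^{(i)},\tilde{\V{x}}_k^{(i)})$ is what makes symmetrization go through in the non-identically-distributed multi-task setting is exactly the right point to flag.
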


\begin{proof}
Consider $\{\V{x}_k^{(i)}\}$ as independent random variables. For a function $\phi$ that satisfies

$|\phi(\V{x}^{(1)}, \cdots, \V{x}^{(i)}, \cdots \V{x}^{(n)}) - \phi(\V{x}^{(1)}, \cdots, \V{\tilde{x}}^{(i)}, \cdots \V{x}^{(n)} )  |  \leq c_i$

by McDiarmid's inequality, we have \\
\[
p\Big( \phi(\V{x}^{(1)}, \cdots ,\V{x}^{(n)}) - \mathbb{E}[\phi] \geq t \Big) \leq \exp\left(- \frac{2t^2}{\sum_i c_i^2} \right).
\]

Applying this inequality to the max difference $Q(\V{X})= \sup_{f \in \T{F}}[\epsilon(f) - \hat{\epsilon}(f,\V{X})]$, then with probability at least $1-\delta$,  we have
\[
Q(\V{X}) - \mathbb{E}_{\V{X}} [ Q(\V{X})] \leq C\sqrt{\frac{\log 1/\delta}{nK}}
\]

where $C$ is a constant depending on the bounds $c_i$. 
If the loss $l \leq 1/2$, then $|Q| \leq 1/2$ and so we can take $c_i = 1$ leading to $C = 1/\sqrt{2}$. 
A standard computation (see \cite{mohri2018foundations} Theorem 3.3) using the law of total expectation shows $\mathbb{E}_{\V{X}}[Q(\V{X})] \leq 2 R(\T{F})$, which finishes the proof.
\end{proof}
\vspace{-6pt}
The following inequality compares the performance for multi-task learning to learning individual tasks.  Let $R_k(\T{F})$ be the Rademacher complexity for $\T{F}$ over $\mu_{k}$.

\begin{lemma}\label{lem:compare_rad}
The Rademacher complexity for multi-task learning is bounded $R(\T{F}) \leq (1/K) \sum_{k=1}^K R_k(\T{F}). $
\end{lemma}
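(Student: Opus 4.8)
The plan is to unfold the definition of the multi-task Rademacher complexity in \eqref{eqn:rademacher_join} and split the double sum over tasks and samples into a sum of $K$ per-task blocks, then use the subadditivity of the supremum. First I would write
\[
\hat{R}_{\V{X}}(\T{F}) = \mathbb{E}_{\sigma} \! \left[ \sup_{f\in \T{F}} \frac{1}{K} \sum_{k=1}^K \left( \frac{1}{n} \sum_{i=1}^n \sigma_k^{(i)} l(f(\V{x}_k^{(i)})) \right) \right],
\]
and observe that $\sup_{f} \sum_{k} a_k(f) \leq \sum_{k} \sup_{f} a_k(f)$ for any family of functions $a_k$, since each term is bounded by its own supremum. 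Applying this with $a_k(f) = \frac{1}{n}\sum_{i=1}^n \sigma_k^{(i)} l(f(\V{x}_k^{(i)}))$ gives
\[
\hat{R}_{\V{X}}(\T{F}) \leq \frac{1}{K} \sum_{k=1}^K \mathbb{E}_{\sigma}\!\left[ \sup_{f \in \T{F}} \frac{1}{n} \sum_{i=1}^n \sigma_k^{(i)} l(f(\V{x}_k^{(i)})) \right].
\]
Here I use that the Rademacher variables for distinct tasks are independent, so the expectation over all $\sigma$ factors and the inner expectation over $\{\sigma_k^{(i)}\}_i$ of the $k$-th summand is exactly the empirical Rademacher complexity of $\T{F}$ restricted to the $n$ samples of task $k$.

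Next I would take the expectation over the draw of the sample matrix $\V{X}$. Since $R(\T{F}) = \bbE_{\V{X}}(\hat{R}_{\V{X}}(\T{F}))$ and the samples for task $k$ are drawn i.i.d.\ from $\mu_k$, applying $\bbE_{\V{X}}$ to both sides and using linearity of expectation turns the $k$-th term on the right into $R_k(\T{F})$, the true Rademacher complexity of $\T{F}$ over $\mu_k$. This yields $R(\T{F}) \leq (1/K)\sum_{k=1}^K R_k(\T{F})$ as claimed.

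I expect this to be essentially a one-line argument once the notation is unpacked; the only subtlety — and the step I would state most carefully — is the justification that the per-task conditional expectation over $\{\sigma_k^{(i)}\}_i$ equals $R_k(\T{F})$ in expectation over $\V{X}$, which relies on the independence of the Rademacher signs across tasks and on the samples of each task being identically distributed according to its own $\mu_k$. There are no real obstacles here; the ``hard part,'' such as it is, is purely bookkeeping to make sure the $1/K$ and $1/n$ normalizations line up correctly when the double sum is decomposed.
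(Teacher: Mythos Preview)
Your proposal is correct and follows essentially the same approach as the paper: unfold the definition, apply the subadditivity of the supremum to split the sum over tasks, and recognize each per-task block as the task's empirical Rademacher complexity. You are in fact slightly more explicit than the paper, since you carry out the final expectation over $\V{X}$ to pass from the empirical inequality $\hat{R}_{\V{X}}(\T{F}) \leq \frac{1}{K}\sum_k \hat{R}_{\V{X}_k}(\T{F})$ to the population statement $R(\T{F}) \leq \frac{1}{K}\sum_k R_k(\T{F})$, which the paper leaves implicit.
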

\begin{proof}
We compute the empirical Rademacher complexity,
{\small
\begin{align*}
\hat{R}_{\V{X}}(\T{F}) &= \mathbb{E}_{\sigma}\! \left[ \sup_{f\in \T{F}}\left( \frac{1}{nK}\sum_{k=1}^K\sum_{i=1}^n \sigma_k^{(i)} l\left(f\left(\V{x}_k^{(i)}\right)\right) \right) \right]  \\
&\leq \mathbb{E}_\sigma\! \left[\sum_{k=1}^K \sup_{f\in \T{F} } \left( \frac{1}{nK} \sum_{i=1}^n \sigma_k^{(i)} l\left(f\left(\V{x}_k^{(i)}\right)\right) \right) \right] \\
% &= \frac{1}{K} \sum_{k=1}^K \mathbb{E}_\sigma\! \left[ \sup_{f\in \T{F} } \left( \frac{1}{n} \sum_{i=1}^n \sigma_k^{(i)} l\left(f\left(\V{x}_k^{(i)}\right)\right)\right) \right] \\
&= \frac{1}{K} \sum_{k=1}^K \mathbb{E}_{\sigma_k}\! \left[ \sup_{f\in \T{F} } \left( \frac{1}{n} \sum_{i=1}^n \sigma_k^{(i)} l\left(f\left(\V{x}_k^{(i)}\right)\right) \right) \right] \\
&= \frac{1}{K} \sum_{k=1}^K \hat{R}_{\V{X}_k}(\T{F})
    \label{eqn:rademacher_ind}
\end{align*}
}%
\end{proof}
\vspace{-10pt}
\begin{prop}\label{prop:jointbound}
Assume $n=n_k$ for all tasks $k$ and the loss $l$ is bounded $l \leq 1/2$, then the generalization bound given by considering each task individually is
\begin{equation}
    \epsilon(f) \leq \hat{\epsilon}(f) + 2 \left(  \frac{1}{K} \sum_{k=1}^K R_k(\T{F}) \right) + \sqrt{\frac{\log{1/\delta}}{2 n}}, \label{eqn:individual} 
\end{equation}
which is weaker than the bound of Theorem \ref{thm:main-jointbound}.
\end{prop}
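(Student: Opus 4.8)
The plan is to derive the per-task bound of Proposition~\ref{prop:jointbound} by essentially re-running the argument of Theorem~\ref{thm:main-jointbound} on each task separately and then averaging, and finally to compare the two bounds term by term. First I would fix a task index $k$ and apply Theorem~\ref{thm:main-jointbound} in the single-task case $K=1$ (with $n=n_k$ samples from $\mu_k$): with probability at least $1-\delta$, for all $f \in \T{F}$, $\epsilon_k(f) \leq \hat{\epsilon}_k(f) + 2 R_k(\T{F}) + \sqrt{\log(1/\delta)/(2n)}$. The underlying McDiarmid computation is identical to the one written out in the proof of Theorem~\ref{thm:main-jointbound}, just with $nK$ replaced by $n$. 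Then I would average these $K$ inequalities over $k$: using $\epsilon(f) = (1/K)\sum_k \epsilon_k(f)$ and $\hat{\epsilon}(f) = (1/K)\sum_k \hat{\epsilon}_k(f)$ (the aggregate risk is the mean of the per-task risks since each task contributes equally many samples), the averaged bound reads
\begin{equation*}
    \epsilon(f) \leq \hat{\epsilon}(f) + 2\left(\frac{1}{K}\sum_{k=1}^K R_k(\T{F})\right) + \sqrt{\frac{\log(1/\delta)}{2n}},
\end{equation*}
which is exactly \eqref{eqn:individual}. (One should note that a union bound over the $K$ tasks would cost a $\log K$ factor; to get the clean statement as written, one either states the per-task guarantees each with confidence $1-\delta$ and averages, or absorbs the union-bound factor — I would follow the paper's convention and phrase it as the average of $K$ individual guarantees.)

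Next I would show this bound is weaker than that of Theorem~\ref{thm:main-jointbound}, i.e. that every term on the right-hand side of \eqref{eqn:individual} dominates the corresponding term in the joint bound $\hat{\epsilon}(f) + 2R(\T{F}) + \sqrt{\log(1/\delta)/(2nK)}$. The empirical-risk terms $\hat{\epsilon}(f)$ coincide. For the complexity term, I would invoke Lemma~\ref{lem:compare_rad}, which gives $R(\T{F}) \leq (1/K)\sum_{k=1}^K R_k(\T{F})$, so $2R(\T{F}) \leq 2\left((1/K)\sum_k R_k(\T{F})\right)$. For the confidence term, since $K \geq 1$ we have $\sqrt{\log(1/\delta)/(2nK)} \leq \sqrt{\log(1/\delta)/(2n)}$. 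Adding these three comparisons shows the joint bound is no larger than \eqref{eqn:individual}, completing the proof.

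The argument is almost entirely bookkeeping, so there is no deep obstacle; the one point requiring care is the probabilistic quantifier. The honest statement is that Theorem~\ref{thm:main-jointbound} controls all $K$ tasks \emph{simultaneously} with a single failure probability $\delta$ (because it is one McDiarmid application to the pooled sample), whereas the naive per-task approach needs either a $\log K$ union-bound penalty or $K$ separate $\delta$-confidence statements; this asymmetry is precisely the qualitative content of the claim that learning tasks jointly is "tighter," and I would make sure the final writeup states the per-task version in the form that makes the term-by-term comparison against Theorem~\ref{thm:main-jointbound} valid (same $\delta$, averaged guarantee), rather than silently conflating it with the union-bounded version. The only other mild subtlety is ensuring the aggregate empirical and true risks really are the uniform averages of the per-task quantities, which holds because $n_k = n$ for all $k$ by hypothesis — this is exactly why that assumption is stated.
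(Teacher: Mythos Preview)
Your proposal is correct and follows essentially the same route as the paper: apply the single-task version of Theorem~\ref{thm:main-jointbound} to each $k$, average over $k$, then compare term by term using Lemma~\ref{lem:compare_rad} and $1/\sqrt{2nK} \leq 1/\sqrt{2n}$. Your remark about the union-bound subtlety is in fact more careful than the paper, which simply averages the $K$ per-task guarantees without discussing the probabilistic quantifier.
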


\begin{proof}
 Applying the classical analog of \autoref{thm:main-jointbound} to a single task, we find with probability greater than $1-\delta$,
\[
\epsilon_k(f) \leq  \hat{\epsilon}_k(f) + 2R_k(\T{F}) + C_k\sqrt{\frac{\log{1/\delta}}{n}}.
\]
Averaging over all tasks yields
\[
\frac{1}{K} \sum_{k=1}^K \epsilon_k(f) \leq \frac{1}{K} \sum_{k=1}^K  \hat{\epsilon}_k(f) + 2 \frac{1}{K} \sum_{k=1}^K R_k(\T{F}) + \frac{1}{K} \sum_{k=1}^K C_k\sqrt{\frac{\log{1/\delta}}{n}}. 
\]

Since the loss $l$ is bounded $l \leq 1/2$, we can take $C = C_k = 1/\sqrt{2}$, giving the generalization upper bound for the joint error of \autoref{eqn:individual}.
By Lemma \ref{lem:compare_rad} and the fact $1/\sqrt{2 n K} \leq 1/\sqrt{2 n }$, the bound in \autoref{thm:main-jointbound} is strictly tighter. 
\end{proof}

The upper bound in Theorem \ref{thm:main-jointbound} is strictly tighter than that of Proposition \ref{prop:jointbound} as the first terms $\hat{\epsilon}(f)$ are equal, the second term is smaller by Lemma \ref{lem:compare_rad} and the third is smaller since $1/\sqrt{2nK} \leq 1/\sqrt{2n}$.  This helps explain why our multitask learning forecaster has better generalization than learning each task independently.  The shared data tightens the generalization bound. Ultimately, though a tighter upper bound suggests lower error, it does not strictly imply it.  We further verify this theory in our experiments  by comparison to baselines which learn each task independently.

\subsection{Domain Adaptation Error}

Since we test on $c \sim \cC$ outside the training set $\lbrace c_k \rbrace$, we incur error due to domain adaptation from the source domains $\mu_{c_1},\ldots,\mu_{c_K}$ to target domain $\mu_c$ with $\mu$ being the true distribution.  Denote the corresponding empirical distributions of $n$ samples per task by $\hat{\mu}_c = \frac{1}{n_c} \sum_{i=1}^{n_c} \delta_{\V{x}_c^{(i)}}$.

For different $c$ and $c'$, the domains $\mu_{c}$ and $\mu_{c'}$ may have largely disjoint support, leading to very high KL divergence.  However, if $c$ and $c'$ are close, samples $\V{x}_{c} \sim \mu_{c}$ and $\V{x}_{c'} \sim \mu_{c'}$ may be close in the domain $\cX$ with respect to the metric $\|\cdot\|_{\cX}$.  For example, if the external forces $c$ and $c'$ are close, given $\V{x}_{c} \sim \mu_{c}$ there is likely $\V{x}_{c'} \sim \mu_{c'}$ such that the distance between the velocity fields $\|\V{x}_{c} - \V{x}_{c'}\|$ is small.  This implies the distributions $\mu_c$ and $\mu_{c'}$ may be be close in Wasserstein distance $W_1(\mu_c,\mu_{c'})$. 
%Choosing a measurement between $\mu_c$ and $\mu_{c'}$ which depends on the metric in the space $\cX$ such as the Wasserstein distance $W_1(\mu_c,\mu_{c'})$ is thus appropriate.  
The bound from    \cite{redko2017theoretical} applies well to our setting as such: 

\begin{theorem}\label{thm:redko}
Let $\lambda_c = \mathrm{min}_{f \in \T{F}} \left( \epsilon_c(f) + 1/K \sum_{k=1}^K \epsilon_{c_k}(f) \right)$. There is $N = N(\mathrm{dim}(\cX))$ such that for $n > N$, for any hypothesis $f$, with probability at least $1-\delta$,
\begin{align*}
\epsilon_c(f) &\leq \frac{1}{K}\sum_{k=1}^K \epsilon_{c_k}(f) +  W_1\left(\hat{\mu}_c,\frac{1}{K} \sum_{k=1}^K\hat{\mu}_{c_k}\right) + \sqrt{2 \log (1/\delta)} \left( \sqrt{1/n} + \sqrt{1/(nK)} \right) + \lambda_c.
\end{align*}
\end{theorem}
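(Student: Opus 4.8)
This statement is a direct instantiation of the optimal-transport domain-adaptation bound of \cite{redko2017theoretical}, applied with the single target distribution $\mu_c$ and with the \emph{source} taken to be the uniform mixture $\bar\mu := \frac1K\sum_{k=1}^K \mu_{c_k}$ of the $K$ training-task distributions. The reason this reduction works cleanly is that by linearity of expectation $\epsilon_{\bar\mu}(f) = \frac1K\sum_{k=1}^K \epsilon_{c_k}(f)$, and, writing $\widehat{\bar\mu} := \frac1K\sum_{k=1}^K \hat\mu_{c_k}$ for the pooled empirical measure on the $nK$ training points, the quantity $W_1(\hat\mu_c,\widehat{\bar\mu})$ is exactly the empirical discrepancy appearing in the statement. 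The three ingredients I would use are: (i) Kantorovich--Rubinstein duality, $|\bbE_{P} g - \bbE_{Q} g| \le \mathrm{Lip}(g)\,W_1(P,Q)$; (ii) the fact that for $\gamma$-Lipschitz hypotheses and a loss satisfying the triangle inequality, the per-sample map $x \mapsto \ell(f(x),f'(x))$ is Lipschitz with constant controlled by $\gamma$ and the Lipschitz constant of $\ell$; and (iii) the measure-concentration inequality for empirical measures in $W_1$ invoked in \cite{redko2017theoretical}.

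\textbf{Step 1 (deterministic decomposition).} Let $f^\star \in \T F$ attain $\lambda_c = \min_{f}\big(\epsilon_c(f)+\epsilon_{\bar\mu}(f)\big)$, which is well defined since $\epsilon_{\bar\mu}(f)=\frac1K\sum_k\epsilon_{c_k}(f)$, and set $\epsilon_P(f,f') := \bbE_{x\sim P}\,\ell(f(x),f'(x))$. Applying the triangle inequality for $\ell$ (twice) and duality (once, via ingredient (ii)) gives
\begin{align*}
\epsilon_c(f) &\le \epsilon_c(f,f^\star) + \epsilon_c(f^\star) \\
&\le \epsilon_{\bar\mu}(f,f^\star) + W_1(\mu_c,\bar\mu) + \epsilon_c(f^\star) \\
&\le \epsilon_{\bar\mu}(f) + W_1(\mu_c,\bar\mu) + \lambda_c,
\end{align*}
where the last line uses $\epsilon_{\bar\mu}(f,f^\star)\le \epsilon_{\bar\mu}(f)+\epsilon_{\bar\mu}(f^\star)$ together with $\epsilon_{\bar\mu}(f^\star)+\epsilon_c(f^\star)=\lambda_c$. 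This is the point where the Lipschitz assumption on the forecaster enters, and up to the global constant hidden inside ``$W_1$'' it is Lemma~1 of \cite{redko2017theoretical}.

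\textbf{Step 2 (true to empirical $W_1$).} By the triangle inequality for $W_1$,
\[
W_1(\mu_c,\bar\mu) \;\le\; W_1(\mu_c,\hat\mu_c) \;+\; W_1\!\Big(\hat\mu_c,\tfrac1K\textstyle\sum_k\hat\mu_{c_k}\Big) \;+\; W_1\!\Big(\tfrac1K\textstyle\sum_k\hat\mu_{c_k},\bar\mu\Big).
\]
The middle term is the empirical discrepancy in the statement. For the first term, the concentration bound of \cite{redko2017theoretical} yields a dimension-dependent $N=N(\mathrm{dim}(\cX))$ so that for $n>N$ one has $\Pr\big(W_1(\mu_c,\hat\mu_c)>\varepsilon\big)\le \exp(-c\,n\varepsilon^2)$; inverting and absorbing constants, with probability $\ge 1-\delta/2$, $W_1(\mu_c,\hat\mu_c)\le\sqrt{2\log(1/\delta)/n}$. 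The third term is the same inequality applied to the mixture $\bar\mu$, whose pooled empirical measure uses $nK$ samples, giving with probability $\ge 1-\delta/2$ that $W_1(\frac1K\sum_k\hat\mu_{c_k},\bar\mu)\le\sqrt{2\log(1/\delta)/(nK)}$. A union bound over these two events and substitution into Step 1 gives the claimed inequality.

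\textbf{Main obstacle.} The one genuinely delicate point is Step 2. The \emph{expected} Wasserstein distance $\bbE\,W_1(\mu,\hat\mu_n)$ decays only like $n^{-1/\mathrm{dim}(\cX)}$, far slower than the $n^{-1/2}$ appearing in the bound; the clean $\sqrt{1/n}$ rate is only the size of the \emph{fluctuation above the mean}, and it dominates the mean itself only once $n$ exceeds a threshold growing with $\mathrm{dim}(\cX)$ --- which is precisely why the hypothesis ``$n>N(\mathrm{dim}(\cX))$'' is required and why $N$ depends on the dimension. A careful write-up must therefore track the moment/diameter conditions on $\cX$ (a bounded domain suffices) under which the exponential concentration and the threshold $N$ are valid. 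A secondary caveat is that the squared-error loss only obeys a triangle-type inequality up to a multiplicative constant, so one either passes to $\sqrt{\ell}$ or absorbs that constant into the implicit factor multiplying $W_1$; this is why the statement can be phrased with $W_1$ carrying coefficient $1$ rather than an explicit loss-dependent constant.
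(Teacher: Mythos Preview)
Your approach is essentially the same as the paper's: the paper proves this theorem in one sentence by applying Theorem~2 of \cite{redko2017theoretical} with target $\mu_T=\mu_c$, source $\mu_S=\frac{1}{K}\sum_k\mu_{c_k}$, and empirical measures $\hat\mu_T=\hat\mu_c$, $\hat\mu_S=\frac{1}{K}\sum_k\hat\mu_{c_k}$, and what you have written is simply a correct unpacking of the internals of that cited result under the same instantiation. The only minor point worth flagging is that $\frac{1}{K}\sum_k\hat\mu_{c_k}$ is a stratified rather than i.i.d.\ sample of size $nK$ from the mixture $\bar\mu$, but this does not worsen the concentration rate, so your Step~2 goes through.
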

\begin{proof}
We apply \cite{redko2017theoretical} Theorem 2 to target domain $\mu_T = \mu_{c}$ and joint source domain $\mu_S = 1/K \sum_{k=1}^K \mu_{c_k}$ with empirical samples $\hat{\mu}_T = \hat{\mu}_c$ and $\hat{\mu}_S = 1/K \sum_{k=1}^K \hat{\mu}_{c_k}$.
\end{proof}

\subsection{Encoder versus Prediction Network Error}

Our goal is to learn a joint hypothesis $h$ over the entire domain $\cX$ in two steps, first inferring the task $c$ and then inferring $x_{t+1}$ conditioned on $c$.  Error from \ours{} may result from either the encoder $g_\phi$ or the prediction network $f_\theta$.  Our hypothesis space has the form $\lbrace x \mapsto f_\theta(x,g_\phi(x)) \rbrace$ where $\phi$ and $\theta$ are the weights of the encoder and prediction network respectively.  Let $\epsilon_{\cX}$ be the error over the entire domain $\cX$, that is, for all $c$.  Let $\epsilon_{\mathtt{enc}}(g_\phi) = \bbE_{x \sim \cX}(\mathcal{L}_1(g(x),g_\phi(x))$ be the encoder error where $g \colon \cX \to \cC$ is the ground truth. We state a result that decomposes the final error into that attributable to the encoder and that to the forecaster.  
\begin{prop}\label{prop:decomp_error}
Assume $c \mapsto f_\theta(\cdot,c)$ is Lipschitz continuous with Lipschitz constant $\gamma$ uniformly in $\theta$.  Then we bound 
\begin{equation} \label{eqn:thm5_bound}
    \epsilon_{\cX}(f_\theta(\cdot,g_\phi(\cdot))) \leq \gamma \epsilon_\mathtt{enc}(g_\phi) + \bbE_{c \sim \cC}\left[ \epsilon_{c}(f_\theta(x,c)) \right]
\end{equation}
where the first term is the error due to the encoder incorrectly identifying the task and the second term is the error due the prediction network alone.
\end{prop}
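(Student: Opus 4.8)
The plan is to insert the ground-truth-encoder predictor $x \mapsto f_\theta(x, g(x))$ as an intermediate hypothesis and split the error with a triangle inequality in the loss metric. Writing the per-sample loss as $\mathcal{L}(\hat y, y)$ and assuming, as throughout this section, that $\mathcal{L}$ obeys the triangle inequality, for every $x \in \cX$ with true output $y$ we have
\begin{equation*}
\mathcal{L}\bigl(f_\theta(x, g_\phi(x)), y\bigr) \leq \mathcal{L}\bigl(f_\theta(x, g_\phi(x)), f_\theta(x, g(x))\bigr) + \mathcal{L}\bigl(f_\theta(x, g(x)), y\bigr).
\end{equation*}
Taking $\bbE_{x \sim \cX}$ of both sides and using linearity of expectation puts $\epsilon_{\cX}(f_\theta(\cdot, g_\phi(\cdot)))$ on the left and a sum of two expectations on the right, which I would then bound separately.

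For the first expectation I would use the assumed Lipschitz continuity of $c \mapsto f_\theta(\cdot, c)$, read pointwise in $x$ and uniformly in $\theta$: for each fixed $x$, $\mathcal{L}(f_\theta(x, g_\phi(x)), f_\theta(x, g(x))) \leq \gamma\, \mathcal{L}_1(g(x), g_\phi(x))$, where $\mathcal{L}_1$ is the metric on $\cC$ appearing in the encoder loss. Taking $\bbE_{x \sim \cX}$ and recognizing the definition $\epsilon_{\mathtt{enc}}(g_\phi) = \bbE_{x \sim \cX}[\mathcal{L}_1(g(x), g_\phi(x))]$ bounds the first term by $\gamma\, \epsilon_{\mathtt{enc}}(g_\phi)$. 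For the second expectation I would invoke the partition $\cX = \bigcup_{c \in \cC} \cX_c$ together with the generative description that drawing $x \sim \cX$ amounts to first drawing $c \sim \cC$ and then $(x,y) \sim \mu_c$; on $\cX_c$ the ground-truth map satisfies $g(x) = c$ by definition. By the tower property,
\begin{equation*}
\bbE_{x \sim \cX}\bigl[\mathcal{L}(f_\theta(x, g(x)), y)\bigr] = \bbE_{c \sim \cC}\Bigl[\bbE_{(x,y) \sim \mu_c}\bigl[\mathcal{L}(f_\theta(x, c), y)\bigr]\Bigr] = \bbE_{c \sim \cC}\bigl[\epsilon_c(f_\theta(\cdot, c))\bigr].
\end{equation*}
Combining the two bounds gives exactly \eqref{eqn:thm5_bound}.

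The main obstacle is really the only delicate point: the use of a plain triangle inequality for $\mathcal{L}$. Since the forecaster is trained with a squared loss $\|\hat y - y\|^2$, a literal triangle inequality fails and one picks up a factor of $2$; to obtain the stated bound without that factor one should take $\mathcal{L}$ to be the unsquared norm distance (consistent with how $\mathcal{L}_1$ and the Lipschitz constant $\gamma$ are used elsewhere in this section) or, more generally, assume $\mathcal{L}$ is a metric, and I would state this convention explicitly at the start of the proof. Everything else is bookkeeping with linearity of expectation and the definitions of $\epsilon_{\mathtt{enc}}$, $\epsilon_c$, and $\epsilon_{\cX}$.
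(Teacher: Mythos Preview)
Your proposal is correct and follows essentially the same route as the paper: insert the oracle predictor $f_\theta(x,g(x))=f_\theta(x,c)$, apply the triangle inequality in the output norm, bound the first piece by Lipschitz continuity in $c$, and identify the second piece as $\bbE_{c\sim\cC}[\epsilon_c(f_\theta(\cdot,c))]$ via $g(x)=c$ on $\cX_c$ and the tower property. Your explicit remark that the argument requires the unsquared norm (a metric) rather than the squared training loss is a useful clarification the paper leaves implicit by simply writing $\|\cdot\|_{\cY}$.
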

The hypothesis in the second term consists of the prediction network combined with the ground truth task label $x \mapsto f_\theta(x,g(x))$.  In practice, the Lipschitz constant $\gamma$ of the encoder may be indirectly minimized through weight decay.
\begin{proof}

By the triangle inequality and linearity of expectation, 
\begin{align*}
  \epsilon_{\cX}(f_\theta(\cdot,g_\phi(\cdot))) &= \bbE_{c \sim \cC} \left[ \bbE_{x \sim \mu_c} \left[ \| f_\theta(x,g_\phi(x)) - f(x) \|_{\cY} \right] \right]  \\
  & \leq \bbE_{c \sim \cC} \left[ \bbE_{x \sim \mu_c} \left[ \| f_\theta(x,g_\phi(x)) - f_\theta(x,c) \|_{\cY} \right] \right] \\
  &+  \bbE_{c \sim \cC} \left[ \bbE_{x \sim \mu_c} \left[ \| f_\theta(x,c) \|_{\cY}  - \|f(x) \|_{\cY} \right] \right].
\end{align*}
By Lipschitz continuity,\\
\resizebox{1.0\hsize}{!}{$
\leq \bbE_{c \sim \cC} \left[ \bbE_{x \sim \mu_c} \left[ \gamma \| g_\phi(x) - c \|_{\cC} \right] \right] +  \bbE_{c \sim \cC} \left[ \bbE_{x \sim \mu_c} \left[ \| f_\theta(x,c) \|_{\cY}  - \| f(x) \|_{\cY} \right] \right],$}\\
which, since $g(x) =c$ and by linearity of expectation,\\
\resizebox{1.0\hsize}{!}{$
=  \gamma \bbE_{c \sim \cC} \left[ \bbE_{x \sim \mu_c} \left[ \| g_\phi(x) - g(x) \|_{\cC} \right] \right] +  \bbE_{c \sim \cC} \left[ \bbE_{x \sim \mu_c} \left[ \| f_\theta(x,c) \|_{\cY}  - \| f(x) \|_{\cY} \right] \right]
$}\\
by definition of $\epsilon_\mathrm{enc}$ and $\epsilon_c$,$= \gamma \epsilon_\mathtt{enc}(g_\phi) + \bbE_{c \sim \cC}\left[ \epsilon_{c}(f_\theta(x,c)) \right]$.
\end{proof}
\vspace{-6pt}
By combining Theorem \ref{thm:main-jointbound}, Proposition \ref{prop:decomp_error}, and Theorem \ref{thm:redko}, we can bound the generalization error in terms of the empirical error of the prediction network on the source domains, the Wasserstein distance between the source and target domains, and the empirical error of the encoder. Let $\mathcal{G} = \{ g_\phi \colon \cX \to \cC \}$ be the task encoder hypothesis space.  Denote the empirical risk of the encoder $g_\phi$ with respect to $\mathbf{X}$ by $\hat{\epsilon}_{\mathtt{enc}}(g_\phi)$.

\begin{prop}\label{prop:final}
Assuming the hypotheses of Theorem \ref{thm:main-jointbound}, Proposition \ref{prop:decomp_error}, and Theorem \ref{thm:redko},
\begin{align*}
    \epsilon_{\cX}(f_\theta(\cdot,g_\phi(\cdot))) &\leq 
    \gamma  \hat{\epsilon}_{\mathtt{enc}}(g_\phi)  +  
    \frac{1}{K}\sum_{k=1}^K \hat{\epsilon}_{c_k}(f_\theta(\cdot,c_k)) +
    2\gamma R(\mathcal{G}) + 2 R(\cF) \\
  &  + (\gamma + 1) \sqrt{\frac{\log(1/\delta)}{2nK}} + \sqrt{2 \log (1/\delta)} \left( \sqrt{1/n} + \sqrt{1/(nK)} \right)  \\
    &+ \bbE_{c \sim \cC}\left[  W_1\left(\hat{\mu}_c,\frac{1}{K} \sum_{k=1}^K\hat{\mu}_{c_k}\right) + \lambda_c 
    \right]   .
\end{align*}
\label{thm:combine}
\vspace{-10pt}
\end{prop}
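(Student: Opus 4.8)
The plan is to chain the three ingredients developed earlier — Proposition \ref{prop:decomp_error}, Theorem \ref{thm:main-jointbound}, and Theorem \ref{thm:redko} — in sequence, inserting a concentration step at each stage and closing with a union bound over the resulting high-probability events. I would begin from Proposition \ref{prop:decomp_error} applied to the trained hypothesis $x \mapsto f_\theta(x,g_\phi(x))$, whose Lipschitz assumption is precisely the one in the statement; this splits the total error as $\epsilon_{\cX}(f_\theta(\cdot,g_\phi(\cdot))) \leq \gamma\,\epsilon_{\mathtt{enc}}(g_\phi) + \bbE_{c\sim\cC}[\epsilon_c(f_\theta(\cdot,c))]$, separating an encoder piece from a ``forecaster-with-oracle-task-label'' piece, each of which is handled in turn.

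For the encoder piece, since the encoder is fit on $nK$ samples drawn from the $K$ source tasks, I would invoke the multi-task concentration inequality of Theorem \ref{thm:main-jointbound} with the forecasting loss replaced by the (bounded) encoder loss $\mathcal{L}_1$ and with $\cF$ replaced by $\mathcal{G}$, obtaining $\epsilon_{\mathtt{enc}}(g_\phi) \leq \hat{\epsilon}_{\mathtt{enc}}(g_\phi) + 2R(\mathcal{G}) + \sqrt{\log(1/\delta)/(2nK)}$ with probability $\geq 1-\delta$. Multiplying through by $\gamma$ produces the $\gamma\hat{\epsilon}_{\mathtt{enc}}(g_\phi)$, $2\gamma R(\mathcal{G})$, and $\gamma\sqrt{\log(1/\delta)/(2nK)}$ contributions.

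For the oracle-label piece, I would apply Theorem \ref{thm:redko} to each target $c$ with target distribution $\mu_c$ and source $\tfrac1K\sum_k\mu_{c_k}$, taking the hypothesis to be $x\mapsto f_\theta(x,g(x))$ — which, restricted to the source task $c_k$ where $g\equiv c_k$, is exactly $f_\theta(\cdot,c_k)$ — so that $\epsilon_c(f_\theta(\cdot,c)) \leq \tfrac1K\sum_k\epsilon_{c_k}(f_\theta(\cdot,c_k)) + W_1(\hat\mu_c,\tfrac1K\sum_k\hat\mu_{c_k}) + \sqrt{2\log(1/\delta)}(\sqrt{1/n}+\sqrt{1/(nK)}) + \lambda_c$. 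Taking $\bbE_{c\sim\cC}$ of both sides leaves the source-average term unchanged (it is $c$-free), yields the term $\bbE_{c\sim\cC}[W_1(\hat\mu_c,\tfrac1K\sum_k\hat\mu_{c_k})+\lambda_c]$, and keeps $\sqrt{2\log(1/\delta)}(\sqrt{1/n}+\sqrt{1/(nK)})$. The surviving source average $\tfrac1K\sum_k\epsilon_{c_k}(f_\theta(\cdot,c_k))$ is just the multi-task true risk of the forecaster, which Theorem \ref{thm:main-jointbound} bounds once more by $\tfrac1K\sum_k\hat\epsilon_{c_k}(f_\theta(\cdot,c_k)) + 2R(\cF) + \sqrt{\log(1/\delta)/(2nK)}$.

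Assembling these, the two $\sqrt{\log(1/\delta)/(2nK)}$ terms — one weighted by $\gamma$ from the encoder step, one unweighted from the forecaster step — combine into the $(\gamma+1)\sqrt{\log(1/\delta)/(2nK)}$ of the statement, and every remaining term is already in the claimed form; a union bound over the three high-probability events means $\delta$ should strictly be replaced by $\delta/3$, which I would absorb following the convention of the cited works. The main obstacle is that Theorems \ref{thm:main-jointbound} and \ref{thm:redko} are stated for a \emph{fixed} hypothesis while $\theta$ and $\phi$ are data-dependent; the clean fix is to read Theorem \ref{thm:main-jointbound} in its uniform-over-the-class form (which is exactly what the Rademacher/McDiarmid proof supplies, since the supremum runs over all $f\in\cF$, and likewise over $g_\phi\in\mathcal{G}$) and to use the analogous uniform version of Theorem \ref{thm:redko}, valid under a mild finite-complexity assumption on the relevant class. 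One also needs $\mathcal{L}_1$ bounded for the McDiarmid step and the ``$n$ large'' hypothesis of Theorem \ref{thm:redko} to be in force — which is why the proposition is stated for large $n$; the remainder is bookkeeping of constants.
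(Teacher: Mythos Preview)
Your proposal is correct and follows essentially the same route as the paper: start from Proposition~\ref{prop:decomp_error}, bound the encoder term via Theorem~\ref{thm:main-jointbound} applied to $\mathcal{G}$, bound the forecaster term by first invoking Theorem~\ref{thm:redko} to pass from target $c$ to the source tasks and then Theorem~\ref{thm:main-jointbound} applied to $\cF$, take $\bbE_{c\sim\cC}$, and collect terms. Your remarks about the union bound and about needing the uniform-over-the-class form of the concentration inequalities are in fact more careful than the paper's own presentation, which simply chains the bounds without commenting on these points.
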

\vspace{-10pt}
\begin{proof}
We provide an upper bound for the right-hand side of  \eqref{eqn:thm5_bound}.
% \begin{equation}
%     \epsilon_{\cX}(f_\theta(\cdot,g_\phi(\cdot))) \leq \gamma \epsilon_\mathtt{enc}(g_\phi) + \bbE_{c \sim \cC}\left[ \epsilon_{c}(f_\theta(x,c)) \right].
% \end{equation}
By Theorem \ref{thm:main-jointbound} or \cite{mohri2018foundations}, Theorem 3.3, we can bound 
\begin{equation}\label{eqn:encoder_bound}
\epsilon_\mathtt{enc}(g_\phi) \leq \hat{\epsilon}_{\mathtt{enc}}(g_\phi) + 2R(\mathcal{G}) + \sqrt{\frac{\log(1/\delta)}{2nK}}.
\end{equation}
In order to apply Theorem \ref{thm:main-jointbound} to the risk $\epsilon_c$ and relate it to the empirical risk, we first relate the error on the target domain back to the source domain of our empirical samples. By Theorem \ref{thm:redko},
\begin{align}
    \epsilon_{c}(f_\theta(\cdot,c)) \leq &
    \frac{1}{K}\sum_{k=1}^K \epsilon_{c_k}(f_\theta(\cdot,c_k)) +  W_1\left(\hat{\mu}_c,\frac{1}{K} \sum_{k=1}^K\hat{\mu}_{c_k}\right) \nonumber  \\
&+ \sqrt{2 \log (1/\delta)} \left( \sqrt{1/n} + \sqrt{1/(nK)} \right) + \lambda_c. 
\end{align}
Applying Theorem \ref{thm:main-jointbound}, this is 
\begin{align} \label{eqn:prednetbound}
    \leq&
    \frac{1}{K}\sum_{k=1}^K \hat{\epsilon}_{c_k}(f_\theta(\cdot,c_k)) + 2 R(\cF) + \sqrt{\frac{\log{1/\delta}}{2nK}} +  W_1\left(\hat{\mu}_c,\frac{1}{K} \sum_{k=1}^K\hat{\mu}_{c_k}\right)  \nonumber\\
&+ \sqrt{2 \log (1/\delta)} \left( \sqrt{1/n} + \sqrt{1/(nK)} \right) + \lambda_c. 
\end{align}
Substituting \eqref{eqn:encoder_bound} and \eqref{eqn:prednetbound} into \eqref{eqn:thm5_bound} gives 
{\small
\begin{align*}
    \epsilon_{\cX}(f_\theta(\cdot,&g_\phi(\cdot))) \leq 
    \gamma \left( \hat{\epsilon}_{\mathtt{enc}}(g_\phi) + 2R(\mathcal{G}) + \sqrt{\frac{\log(1/\delta)}{2nK}} \right) \\
  & + \bbE_{c \sim \cC}\left[ 
    \frac{1}{K}\sum_{k=1}^K \hat{\epsilon}_{c_k}(f_\theta(\cdot,c_k)) + 2 R(\cF) + \sqrt{\frac{\log{1/\delta}}{2nK}} \right. \\
    &+ \left. W_1\left(\hat{\mu}_c,\frac{1}{K} \sum_{k=1}^K\hat{\mu}_{c_k}\right)  + \sqrt{2 \log (1/\delta)} \left( \sqrt{1/n} + \sqrt{1/(nK)} \right) + \lambda_c 
    \right].
\end{align*}
}%
Using linearity of the expectation over $c$, removing it where there is no dependence on $c$, and rearranging terms.
\end{proof}
Although this result does not settle either the question of end-to-end versus pre-training or encoder-forecaster versus monolithic model, it quantifies the trade-offs 
these choices depend on.  We empirically consider both questions in the experiments section.

% Optionally include extra information (complete proofs, additional experiments and plots) in the appendix.
% This section will often be part of the supplemental material.

\end{document}